\theoremstyle{plain}
\newtheorem{theorem}{Theorem}[section]
\newtheorem{lemma}[theorem]{Lemma}
\newtheorem{corollary}[theorem]{Corollary}
\newtheorem{conjecture}[theorem]{Conjecture}
\theoremstyle{definition}
\newtheorem{definition}[theorem]{Definition}
\theoremstyle{remark}
\colorlet{myred}{red!80!black}
\colorlet{myblue}{blue!80!black}
\colorlet{mygreen}{green!80!black}
\colorlet{myorange}{orange!70!red!60!black}
\colorlet{mydarkred}{red!20!black}
\colorlet{mydarkblue}{blue!20!black}
\colorlet{mydarkgreen}{green!20!black}
\tikzset{
  >=latex, 
  node/.style={thick,circle,draw=myblue,minimum size=22,inner sep=0.5,outer sep=0.6},
  node in/.style={node,green!20!black,draw=mygreen!30!black,fill=mygreen!25},
  node hidden/.style={node,blue!20!black,draw=myblue!30!black,fill=myblue!20},
  node convol/.style={node,orange!20!black,draw=myorange!30!black,fill=myorange!20},
  node out/.style={node,red!20!black,draw=myred!30!black,fill=myred!20},
  connect/.style={thick,mydarkblue}, 
  connect arrow/.style={-{Latex[length=4,width=3.5]},thick,mydarkblue,shorten <=0.5,shorten >=1},
  node 1/.style={node in}, 
  node 2/.style={node hidden},
  node 3/.style={node out}
}
\tikzset{arrow/.style={-stealth, thick, draw=gray!80!black}}
\definecolor{lcfree}{RGB}{252,224,225}
\definecolor{emb}{RGB}{252,200,200}
\definecolor{att}{RGB}{200,252,200}
\definecolor{LN}{RGB}{200,252,150}
\definecolor{FFN}{RGB}{203,252,250}
\definecolor{SoftMax}{RGB}{203,203,250}
\definecolor{RC}{RGB}{250,200,152}
\definecolor{Linear}{RGB}{203,252,203}
\definecolor{lcnorm}{RGB}{152,152,152}
\newcommand{\tp}{{\scriptscriptstyle\mathsf{T}}}
\newcommand{\ih}{\hat{\imath}}
\newcommand{\jh}{\hat{\jmath}}
\newcommand{\ib}{\bar{\imath}}
\newcommand{\jb}{\bar{\jmath}}
\DeclareMathOperator{\ReLU}{ReLU}
\DeclareMathOperator{\SoftMax}{SoftMax}
\DeclareMathOperator{\SoftPlus}{SoftPlus}
\DeclareMathOperator{\argmax}{argmax}
\DeclareMathOperator{\mask}{mask}
\DeclareMathOperator{\sgn}{sgn}
\let\latexcirc=\circ
\newcommand{\ccirc}{\mathbin{\mathchoice
  {\xcirc\scriptstyle}
  {\xcirc\scriptstyle}
  {\xcirc\scriptscriptstyle}
  {\xcirc\scriptscriptstyle}
}}
\newcommand{\xcirc}[1]{\vcenter{\hbox{$#1\latexcirc$}}}
\let\circ\ccirc
\begin{document}
\title{Attention is a smoothed cubic spline}
\author[Z.~Lai]{Zehua~Lai}
\address{Department of Mathematics, University of Texas, Austin, TX 78712}
\email{zehua.lai@austin.utexas.edu}
\author[L.-H.~Lim]{Lek-Heng~Lim}
\address{Computational and Applied Mathematics Initiative, Department of Statistics,
University of Chicago, Chicago, IL 60637}
\email{lekheng@uchicago.edu}
\author[Y.~Liu]{Yucong~Liu}
\address{School of Mathematics, Georgia Institute of Technology, Atlanta, GA 30332}
\email{yucongliu@gatech.edu}

\begin{abstract}
We highlight a perhaps important but hitherto unobserved insight: The attention module in a transformer is a smoothed cubic spline. Viewed in this manner, this mysterious but critical component of a transformer becomes a natural development of an old notion deeply entrenched in classical approximation theory. More precisely, we show that with ReLU-activation, attention, masked attention, encoder--decoder attention are all cubic splines.  As every component in a transformer is constructed out of compositions of various attention modules (= cubic splines) and feed forward neural networks (= linear splines), all its components --- encoder, decoder, and encoder--decoder blocks; multilayered encoders and decoders; the transformer itself --- are cubic or higher-order splines. If we assume the Pierce--Birkhoff conjecture, then the converse also holds, i.e., every spline is a ReLU-activated encoder. Since a spline is generally just $C^2$, one way to obtain a smoothed $C^\infty$-version is by replacing ReLU with a smooth activation; and if this activation is chosen to be SoftMax, we recover the original transformer as proposed by Vaswani et al. This insight sheds light on the nature of the transformer by casting it entirely in terms of splines, one of the best known and thoroughly understood objects in applied mathematics.
\end{abstract}
\maketitle

The transformer \cite{vaswani2017attention} underlies many modern  AI technologies in the current news cycle. Splines, on the other hand, are among the oldest tools in classical approximation theory, studied since the 1940s, and culminated in the 1980s \cite{de2009way} before taking on a new life in the form of wavelets (e.g., the celebrated Cohen--Daubechies--Feauveau wavelet \cite{daub} that underlies JPEG 2000 compression comes from a B-spline). Indeed, the word ``spline'' originally refers to the flexible wooden strip that serves as a bendable ruler for shipbuilders and draftsmen to draw smooth shapes since time immemorial; the Wright brothers had notably used such wooden splines to design their aircraft. It is therefore somewhat surprising that a notion so old is nearly one and the same as a notion so new --- we will show that every $\ReLU$-activated attention module $F : \mathbb{R}^{n \times p} \to \mathbb{R}^{m \times p}$ is a multivariate cubic spline, and, if we assume a conjecture of Garrett Birkhoff and Richard Pierce from 1956 \cite{BP}, then  conversely every multivariate spline $G : \mathbb{R}^m \to \mathbb{R}^n$ is a $\ReLU$-activated encoder. The usual $\SoftMax$-activated attention module is thus a simple and natural way to make a cubic spline, which is at most a $C^2$-function, into a smooth function --- by replacing the nonsmooth $\ReLU$ with a smooth $\SoftMax$.

Why did approximation theorists not discover the transformer then? We posit that it is due to a simple but fundamental difference in how they treat the decomposition of a complicated function into simpler ones. In approximation theory and harmonic analysis, one decomposes a complicated function $F$ into a  \emph{sum} of simpler functions $f_1,\dots,f_r$,
\begin{equation}\label{eq:sum}
F = f_1 + f_2 + \dots + f_r;
\end{equation}
in artificial intelligence, one decomposes $F$ into a \emph{composition} of simpler functions $F_1,\dots,F_r$,
\begin{equation}\label{eq:chain}
F = F_1 \circ F_2 \circ \dots \circ F_r. 
\end{equation}
This fundamental difference in modeling a function is a key to the success of modern AI models. Suppose $F : \mathbb{R}^n \to \mathbb{R}^n$. If we model $F$ as a sum in \eqref{eq:sum}, the number of parameters scales like $nd^n$:
\[
F(x) = \sum_{i_1=1}^d \dots \sum_{i_n=1}^d \sum_{j=1}^n a_{i_1 i_2 \cdots i_nj} \varphi_{i_1}(x_1) \varphi_{i_2}(x_2) \cdots \varphi_{i_n}(x_n) e_j;
\]
whereas if we model $F$ as a composition in \eqref{eq:chain}, it scales like $dn^2 + (d-1)n$:
\[
F(x) = A_d \sigma_{d-1}  A_{d -1}  \cdots  \sigma_2 A_2  \sigma_1  A_1x
\]
with $A_i \in \mathbb{R}^{n \times n}$, $\sigma_i$  parameterized by a vector in  $\mathbb{R}^n$. Note that even if $d=2$, the size $n2^n$ quickly becomes untenable. Evidently, these ball park estimates are made with some assumptions:
The root cause of this notorious \emph{curse of dimensionality} is that there are no good general ways to construct the basis function $f_i$ in \eqref{eq:sum} except as a tensor product of low (usually one) dimensional basis functions, i.e., as $\varphi_{i_1} \otimes \varphi_{i_2} \otimes \dots \otimes \varphi_{i_n} \otimes e_j$. The compositional model \eqref{eq:chain} allows one to circumvent this problem beautifully. Take the simplest case of a $d$-layer feed forward neural network, as we did above; then it is well known that $d$ can be small \cite{cybenko1989approximation,kidger2020universal}.

An important feature of \eqref{eq:sum} and \eqref{eq:chain} is that both work well with respect to derivative by virtue of linearity
\[
DF = Df_1 + Df_2 + \dots + Df_r
\]
or chain rule
\[
DF = DF_1 \circ DF_2 \circ \dots \circ DF_r.
\]
The former underlies techniques for solving various PDEs, whether analytically or numerically; the latter underlies the back-propagation algorithm for training various AI models (where the former also plays a role through various variants of the stochastic gradient descent algorithm). The bottom line is that the conventional way to view a cubic spline, as a sum of polynomials supported on disjoint polygonal regions or a sum of monomials, takes the form in \eqref{eq:sum}. A $\ReLU$-attention module is just the same cubic spline expressed in the form \eqref{eq:chain}, and in this form there is a natural and straightforward way to turn it into a smooth function, namely, replace all nonsmooth $F_i$'s with smooth substitutes --- if we replace $\ReLU$ by $\SoftMax$, we obtain the attention module as defined in \cite{vaswani2017attention}. This is a key insight of our article. 

It is well-known \cite{arora2018understanding} that a $\ReLU$-activated feed forward neural network may be viewed as a \emph{linear spline} expressed in the form of \eqref{eq:chain}. When combined with our insight that a $\ReLU$-activated attention module is a \emph{cubic spline}, we deduce that every other intermediate components of the $\ReLU$-transformer --- encoder, decoder, encoder--decoder --- are either cubic or higher-order spline, as they are constructed out of compositions and self-compositions of $\ReLU$-activated feed forward neural networks  and $\ReLU$-activated attention modules.

A word of caution: We are not claiming that $\SoftMax$ would be a natural smooth replacement for $\ReLU$. We will touch on this in Section~\ref{sec:conc}. Indeed, according to recent work \cite{wortsman2023replacing}, this replacement may be wholly unnecessary --- when it comes to transformers, $\ReLU$ would be an equally if not superior choice of activation compared with $\SoftMax$.

\subsection{Understanding transformers via splines}

Our main contribution is to explain a little-understood new technology using a well-understood old one. For the benefit of approximation theorists who may not be familiar with transformers or machine learning theorists who may not be familiar with splines, we will briefly elaborate.

The transformer has become the most impactful technology driving AI. It has revolutionized natural language processing, what it was originally designed for \cite{vaswani2017attention}, but by this point there is no other area in AI, be it computer vision \cite{dosovitskiy2021an}, robotics \cite{pmlr-v229-zitkovich23a}, autonomous vehicles \cite{prakash2021multi}, etc, that is left untouched by transformers. This phenomenal success is however empirical, the fundamental principles underlying the operation of transformers have remained elusive.

The attention module is evidently the most critical component within a transformer, a fact reflected in the title of the paper that launched the transformer revolution \cite{vaswani2017attention}. It is arguably the only new component --- the remaining constituents of a transformer are $\ReLU$-activated feed forward neural networks, which have been around for more than 60 years \cite{Rosenblatt} and thoroughly investigated. Unsurprisingly, it is also the least understood. An attention module is still widely understood by way of ``query, key, value'' and a transformer as a flow chart, as in the article where the notion first appeared \cite{vaswani2017attention}. The main goal of our article is to understand the attention module in particular and the transformer in general, by tying them to one of the oldest and best-understood object in approximation theory.

Splines are a mature, well-understood technology that has been thoroughly studied and widely used \cite{Schoenberg1946, SchoenbergPNAS, dw1976Splines, de1978practical, BoxSpline, Chui, Wahba, Shikin, Handbook, de2009way}, one of our most effective and efficient methods for approximating known functions and interpolating unknown ones. They have numerous applications and we will mention just one: representing intricate shapes in computer graphics and computer-aided design. Readers reading a hard copy of this article are looking at fonts whose outlines are defined by splines \cite{metafont}; those viewing it on screen are in addition using a device likely designed with splines \cite{HCAD}. Splines have ushered in a golden age of approximation theory, and were studied extensively c.\ 1960--1980, until wavelets supplanted them. One could not have asked for a better platform to understand a new technology like the attention module and transformer.

Nowhere is this clearer than our constructions in Section~\ref{sec:s=t} to show that every spline is an encoder of a $\ReLU$-transformer. These constructions reveal how each feature of the transformer --- attention,  heads, layers, feed forward neural networks --- plays an essential role. We made every attempt to simplify and failed: Omit any of these features and we would not be able to recreate an arbitrary spline as an encoder. It were as if the inventors of transformer had designed these features not with any AI applications in mind but to construct splines as compositions of functions.

\section{Mathematical description of the transformer}\label{sec:math}

A transformer is typically presented in the literature as a flow chart \cite[Figure~1]{vaswani2017attention}.  We show a version in Figure~\ref{fig:trans}.

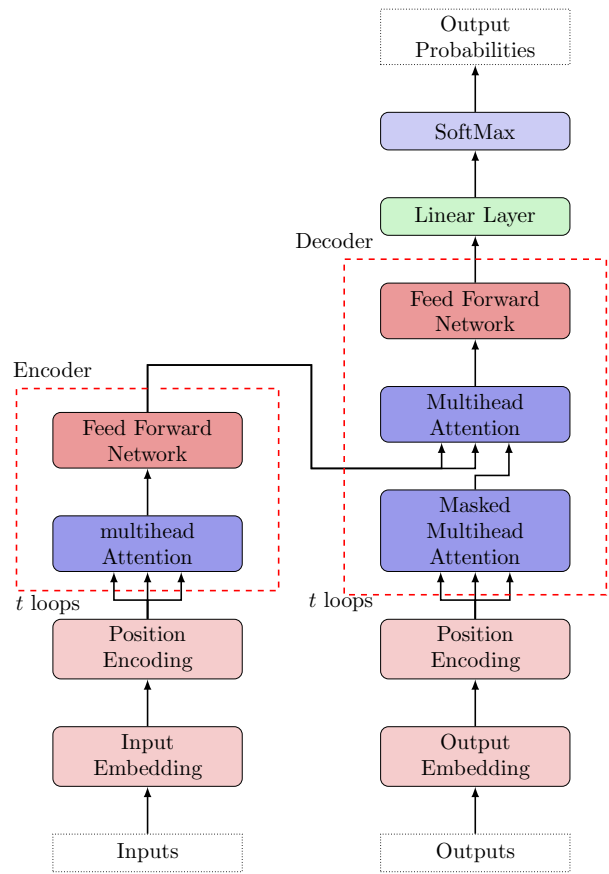
\begin{figure}[htb]
    \centering
    \resizebox{0.49\textwidth}{!}{
    \begin{tikzpicture}[
        start chain=going above,
        node distance=5ex and 15em,
        every join/.style={->,thick},
        ]
    \tikzset{
      base/.style={draw, on chain, on grid, align=center, minimum height=4ex},
      proc/.style={base, rectangle, text width=8em},
      test/.style={base, diamond, aspect=2, text width=5em},
      term/.style={proc, rounded corners},
      coord/.style={coordinate, on chain, on grid, node distance=6ex and 15em},
      nmark/.style={draw, cyan, circle, font={\sffamily\bfseries}},
      norm/.style={->, draw, lcnorm},
      it/.style={font={\small\itshape}}
    }
    \node[proc, densely dotted](p1){Inputs};
    \node[term, fill= myred!20, join]{Input \\Embedding};
    \node[term, fill = myred!20,join](p6){Position \\Encoding};
    \node[term, fill = myblue!40,join](p2){multihead\\Attention};
    \node[term, fill = myred!40,join](p3){Feed Forward\\Network};
   
    \node[proc, densely dotted, right= of p1](t1){Outputs};
    \node[term, fill= myred!20, join]{Output \\Embedding};
    \node[term, fill = myred!20,join](t8){Position \\Encoding};
    \node[term, fill = myblue!40,join](t2){Masked Multihead\\Attention};
    \node[term, fill = myblue!40](t3){Multihead\\Attention};
    \node[term, fill = myred!40,join](t4){Feed Forward\\Network};
    \node[term, fill = mygreen!20,join]{Linear Layer};
    \node[term, fill = myblue!20,join]{SoftMax};
    \node[proc, densely dotted,join]{Output \\Probabilities};
   
    \draw[->,thick](p3.north) -- ++(0em,5ex) -- ++(7.5em,0ex) -- ++(0em,-11ex) -|(t3.south);
   
    \draw[->,thick](p3.north) -- ++(0em,5ex) -- ++(7.5em,0ex) -- ++(0em,-11ex) -|(t3.220);
    \path (p3.west) to node [near start,  xshift=0em,yshift=7.5ex] {Encoder} (p3);
    \path (p2.west) to node [near start,  xshift=-1em,yshift=-8.5ex] {$t$ loops} (p3);
    \draw[red,thick,dashed] (p3.north) ++(0em,2.5ex) -- ++(-6em,0ex) -- ++(0em,-21.5ex) -- ++(12em,0ex) -- ++(0em,+21.5ex) -- ++(-5.5em,0ex);
    
    \draw[->,thick] (p6.north) -- ++(0em,2ex) -|(p2.220);
    
    \draw[->,thick] (p6.north) -- ++(0em,2ex) -|(p2.320);
    
    \draw[->,thick] (t8.north) -- ++(0em,2ex) -|(t2.230);
    
    \draw[->,thick] (t8.north) -- ++(0em,2ex) -|(t2.310);
    
    \draw[->,thick] (t2.north) -- ++(0.0em,1.5ex) -|(t3.320);
    
    \path (t4.east) to node [near start,  xshift=-10.5em,yshift=-10.8em] {$t$ loops} (t2.east);
    \path (t4.east) to node [near start,  xshift=-10em,yshift=9.5ex] {Decoder} (t3);
    \draw[red,thick,dashed] (t4.north) ++(0em,2.5ex) -- ++(6em,0ex) -- ++(0em,-35.7ex) -- ++(-12em,0ex) -- ++(0em,35.7ex) -- ++(6em,0ex);
    \end{tikzpicture}}
    \caption{Transformer as flow chart.}
    \label{fig:trans}
\end{figure}

Without a rigorous definition of the transformer, it will be difficult if not impossible to prove mathematical claims about it.  We will nail down in mathematically precise terms the full inner workings of a transformer. While it is common to find descriptions that selectively present parts as well-defined maps and revert to words and pictures when it becomes less convenient, what sets us apart below is thoroughness --- nothing will be swept under the rug. On occasions we had to look into the source codes of common implementations to unravel inconvenient details left ambiguous in the literature.  This section is our small side contribution and a public service. 

The heart of Figure~\ref{fig:trans} are the two parts enclosed in red dash lines, called encoder and decoder respectively. They are constructed out of feed forward neural networks, defined in Section~\ref{sec:nn}, and attention modules, defined in Section~\ref{sec:att}, chained together via function compositions. The simplest version is the encoder in Section~\ref{sec:enc} and is what the uninitiated reader should keep in mind. We add the bells and whistles later: Section~\ref{sec:mask} defines the \emph{masked} attention in the right-half of Figure~\ref{fig:trans}, from which we obtain the decoder in Section~\ref{sec:dec}. Section~\ref{sec:encdec} explains the encoder--decoder structure --- the left- and right-halves in Figure~\ref{fig:trans}. Section~\ref{sec:trans} puts everything together to define the transformer.  Section~\ref{sec:norm} discusses the one omission in Figure~\ref{fig:trans}, the ``add \& norm'' layers found in  \cite[Figure~1]{vaswani2017attention}.

\subsection{Notations}\label{sec:notate}

We write all vectors in $\mathbb{R}^n$ as column vectors, i.e., $\mathbb{R}^n \equiv \mathbb{R}^{n \times 1}$. Let $x_1,\dots,x_n \in \mathbb{R}$. When enclosed in parentheses $(x_1,\dots,x_n)$ denotes a \emph{column} vector, i.e.,
\[
(x_1,\dots,x_n) \coloneqq \begin{bmatrix} x_1 \\ \vdots \\ x_n \end{bmatrix} \in \mathbb{R}^n.
\]
When enclosed in brackets $[x_1,\dots,x_n] \in \mathbb{R}^{1 \times n}$ is a \emph{row} vector.

We will apply this convention more generally: For matrices $X_1,\dots,X_n \in \mathbb{R}^{m \times p}$, we write
\[
(X_1,\dots,X_n) \coloneqq \begin{bmatrix} X_1 \\ \vdots \\ X_n \end{bmatrix} \in \mathbb{R}^{mn \times p}
\]
and $[X_1,\dots,X_n] \in \mathbb{R}^{m \times np}$.

When we write $(f_1,\dots,f_h)$ for functions $f_i : \mathbb{R}^{n \times p} \to \mathbb{R}^{m \times p}$, $i=1,\dots,h$, it denotes the function
\[
(f_1,\dots,f_h)  : \mathbb{R}^{n \times p} \to \mathbb{R}^{mh \times p}, \quad  X \mapsto \begin{bmatrix} f_1(X) \\ \vdots \\ f_h(X) \end{bmatrix}.
\]

The function $\SoftMax : \mathbb{R}^n \to \mathbb{R}^n$  takes a vector $x = (x_1, \dots, x_n) \in \mathbb{R}^n$ and outputs a probability vector of the same dimension,
\[
\SoftMax(x) \coloneqq \biggl(\frac{e^{x_1}}{\sum_{i=1}^n e^{x_i}}, \dots, \frac{e^{x_n}}{\sum_{i=1}^n e^{x_i}}\biggr).
\]
When $\SoftMax$ is applied to a matrix $X \in \mathbb{R}^{n \times p}$, it is applied columnwise to each of the $p$ columns of $X$. So $\SoftMax : \mathbb{R}^{n \times p} \to \mathbb{R}^{n \times p}$.

Although we will write $\mathbb{R}$ throughout to avoid clutter, we will allow for $-\infty$ in the argument of our functions on occasion, which will be clearly indicated. Note that $\SoftMax(x)_i = 0$ if $x_i = -\infty$.

\subsection{Feed forward neural network}\label{sec:nn}

The rectified linear unit $\ReLU : \mathbb{R} \to \mathbb{R}$ is defined by $\ReLU(x) = \max(x,0) \eqqcolon x^+$ and extended coordinatewise to vectors in $\mathbb{R}^n$ or matrices in $\mathbb{R}^{n \times p}$. We also introduce the shorthand $x^- \coloneqq \ReLU(-x)$. Clearly, $X = X^+ - X^-$ for any $X \in \mathbb{R}^{n \times p}$.

An $l$-layer feed forward neural network is a map $\varphi:\mathbb{R}^n \to \mathbb{R}^{n_{l+1}}$ defined by a composition:
\[
\varphi(x) = A_{l+1} \sigma_{l} A_{l}  \cdots  \sigma_2 A_2\sigma_1 A_1 x + b_{l+1}
\]
for any input $x \in \mathbb{R}^n$, weight matrix  $A_i \in \mathbb{R}^{n_i\times n_{i-1}}$, with $n_0 = n$, $\sigma_i(x) \coloneqq \sigma(x +b_i)$, 
with $b_i \in \mathbb{R}^{n_i}$ the bias vector, and $\sigma : \mathbb{R} \to \mathbb{R}$ the activation function, applied coordinatewise. In this article, we set $\sigma = \ReLU$ throughout. To avoid clutter we omit the  $\circ$ for function composition within a feed forward neural network unless necessary for emphasis, i.e., we will usually write $A \sigma B$ instead of $A \circ \sigma \circ B$. When $\varphi$ is applied to a matrix $X  \in \mathbb{R}^{n \times p}$,  it is always applied columnwise to each of the $p$ columns of $X$. So $\varphi : \mathbb{R}^{n \times p} \to \mathbb{R}^{n_{l+1} \times p}$. We will also drop the ``feed forward'' henceforth since all neural networks that appear in our article are feed forward ones.

\subsection{Attention}\label{sec:att}

The  \emph{attention module} is known by a variety of other names, usually a combination of attention/self-attention module/mechanism, and  usually represented as flow charts as in Figure~\ref{fig:attention}.

\begin{figure}[htb]
    \centering
     \begin{subfigure}[b]{0.35\textwidth}
        \centering
        \begin{tikzpicture}[scale=0.8,
            every node/.append style={thick,rounded corners=2pt}]
            \node (q) at (.5,0) {$Q$};
            \node (k) at (1.5,0) {$K$};
            \node (v) at (2.5,0) {$V$};
        
            \node[draw,fill=myblue!20] (matmul1) at (1,1) {MatMul};
            \node[draw,fill=myred!40] (scale) at (1,2) {Scale};
            \node[draw,fill=mygreen!20] (SoftMax) at (1,3) {SoftMax};
            \node[draw, fill=blue!20, minimum width=1.75cm] (matmul2) at (1.75,4) {MatMul};
        
            \path[->,thick]
                (matmul1) edge (scale)
                (scale) edge (SoftMax)
                (q) edge (.5,.7)
                (k) edge (1.5,.7)
                (v) edge (2.5,3.7)
                (SoftMax) edge (1,3.7);
        \end{tikzpicture}
        \caption{SoftMax attention}
    \end{subfigure}
\quad
    \begin{subfigure}[b]{0.35\textwidth}
        \centering
        \begin{tikzpicture}[scale=0.8,
            every node/.append style={thick,rounded corners=2pt}]
            \node (q) at (.5,0) {$Q$};
            \node (k) at (1.5,0) {$K$};
            \node (v) at (2.5,0) {$V$};
        
            \node[draw,fill=myblue!20] (matmul1) at (1,1) {MatMul};
            \node[draw,fill=myred!40] (scale) at (1,2) {Scale};
            \node[draw,fill=mygreen!20] (ReLU) at (1,3) {ReLU};
            \node[draw, fill=blue!20, minimum width=1.75cm] (matmul2) at (1.75,4) {MatMul};
        
            \path[->,thick]
                (matmul1) edge (scale)
                (scale) edge (ReLU)
                (q) edge (.5,.7)
                (k) edge (1.5,.7)
                (v) edge (2.5,3.7)
                (SoftMax) edge (1,3.7);
        \end{tikzpicture}
        \caption{ReLU attention}
    \end{subfigure}
    \caption{Attention module as flow chart}
    \label{fig:attention}
\end{figure}
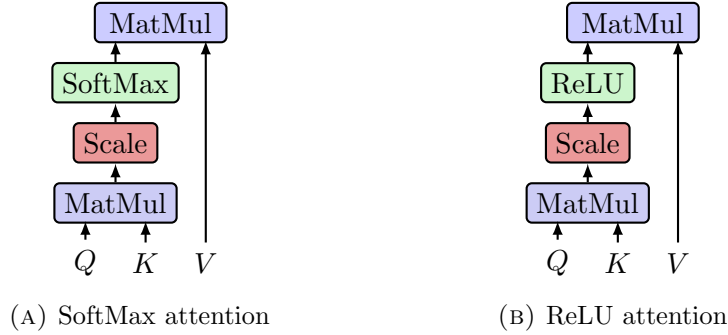

Mathematically, it is a map $\alpha: \mathbb{R}^{n \times p} \to \mathbb{R}^{m \times p}$,
\begin{equation}\label{eq:att}
\alpha(X) \coloneqq V(X)  \SoftMax\bigl(K(X)^\tp  Q(X)\bigr) ,
\end{equation}
where $Q :  \mathbb{R}^{n \times p} \to \mathbb{R}^{d \times p}$, $K :  \mathbb{R}^{n \times p} \to \mathbb{R}^{d \times p}$, $V :  \mathbb{R}^{n \times p} \to \mathbb{R}^{m \times p}$ are \emph{linear layers}, i.e., given by affine maps
\begin{equation}\label{eq:QKV}
    Q(X)  = A_Q X + B_Q, \quad
    K(X)  = A_K X + B_K, \quad
    V(X)  = A_V X + B_V,
\end{equation}
with weight matrices $A_Q, A_K \in \mathbb{R}^{d \times n}$, $A_V \in \mathbb{R}^{m \times n}$, and
bias matrices $B_Q, B_K \in \mathbb{R}^{d \times p}$, $B_V \in \mathbb{R}^{m \times p}$. Here we have used the more general \emph{affine} form of these linear layers as attention modules are implemented in practice,\footnote{\url{https://pytorch.org/docs/stable/generated/torch.nn.MultiheadAttention.html\#torch.nn.MultiheadAttention}; the affine form is obtained by setting \texttt{add\_bias\_kv = True}.} as opposed to the \emph{linear} form in  \cite{vaswani2017attention} where the biases are set to zero. The $\SoftMax$ in \eqref{eq:att} is applied columnwise and outputs a $p \times p$ matrix. 

The map $\alpha$ implements the mechanism of taking a query and a set of key--value pairs to an output. Interpreted in this way, the input $X \in \mathbb{R}^{n \times p}$ is a data sequence of length $p$, with each data point $x_i \in \mathbb{R}^n$, $i =1,\dots, p$. The columns of  $Q(X)$ and $K(X)$ represent queries and keys respectively --- note that these are vectors in $\mathbb{R}^d$ and $d$ is generally much smaller than $m$ or $n$. The columns of $V(X)$ represent values.

More generally, a multihead or $h$-headed attention module is a map $\alpha: \mathbb{R}^{n \times p} \to  \mathbb{R}^{mh \times p}$ given by
\begin{equation}\label{eq:h}
\alpha(X) = (\alpha_1(X), \dots, \alpha_h(X)) 
\end{equation}
where $\alpha_i: \mathbb{R}^{n \times p} \to  \mathbb{R}^{m \times p}$ are attention modules as in \eqref{eq:att}, $i = 1, \dots, h$. The reader is reminded of our convention in Section~\ref{sec:notate}: parentheses denote column, which is why in our constructions we will often the phrase ``stacking $\alpha_1,\dots,\alpha_h$ to obtain $\alpha$'' to mean \eqref{eq:h}.

\subsection{Encoder}\label{sec:enc}
 
An \emph{encoder block}, or more precisely a $h$-head encoder block, is a map $\varepsilon : \mathbb{R}^{n \times p} \to \mathbb{R}^{n_{l+1} \times p}$ obtained by composing the output of a $h$-head attention module $\alpha: \mathbb{R}^{n \times p} \to  \mathbb{R}^{mh \times p}$, with an $l$-layer $\ReLU$-neural network $\varphi : \mathbb{R}^{mh \times p} \to \mathbb{R}^{n_{l+1} \times p}$,
\begin{equation}\label{eq:encblk}
\varepsilon = \varphi \circ \alpha.
\end{equation}
More generally, an \emph{encoder} or $t$-layer encoder, $\varepsilon_t : \mathbb{R}^{n \times p} \to \mathbb{R}^{n_{t+1} \times p}$ is obtained by composing $t$ encoder blocks, i.e.,
\begin{equation}\label{eq:enc}
\varepsilon_t = \varphi_t \circ \alpha_t \circ \varphi_{t-1} \circ \alpha_{t-1} \circ \dots \circ\varphi_1 \circ \alpha_1,
\end{equation}
where $\varphi_i: \mathbb{R}^{m_i \times p} \to \mathbb{R}^{n_{i+1} \times p}$ are  neural networks and $\alpha_i: \mathbb{R}^{n_i \times p} \to \mathbb{R}^{m_i \times p}$ are attention modules, $i=1,\dots,t$, $n_1 = n$. In Figure~\ref{fig:trans}, the encoder is the part enclosed within the red dash lines on the left.  The structure  in \eqref{eq:enc} appears to require alternate compositions of attention modules and  neural networks but one may skip some or all of the $\varphi_i$'s. The reason is that we may choose these $\varphi_i$'s to be an identity map, which can be represented as a one-layer  neural network as $x = \ReLU(x) - \ReLU(-x)$.

While we allow the neural networks appearing in \eqref{eq:enc}  to have multiple hidden layers, the original proposed model in \cite{vaswani2017attention} requires that they be single-layer. We will show in Lemma~\ref{lem:1layer} that these are in fact equivalent: Any encoder of the form \eqref{eq:enc} may be written as one where all $\varphi_i$'s have only one hidden layer, but at the expense of a larger $t$.

\subsection{Masked attention}\label{sec:mask}

In many applications of transformers, particularly large language models, the data is of a sequential nature. So the function $f$ we want to learn or approximate is expected to be \emph{autoregressive} \cite{vaswani2017attention}, i.e.,  $f: \mathbb{R}^{n \times p} \to \mathbb{R}^{m \times p}$ takes the form
\begin{equation}\label{eq:auto}
[x_1, \dots, x_p] \mapsto [f_1(x_1), f_2(x_1,x_2), \dots, f_p(x_1,\dots,x_p)].
\end{equation}
In other words $f_j : \mathbb{R}^{n \times j} \to \mathbb{R}^m$ depends only on the first $j$ columns $x_1, \dots, x_j$, $j=1,\dots,p$. In general $f$ will be nonlinear, but when $f$ is linear, then this simply means it is given by an upper triangular matrix. So an autoregressive function may be viewed as a nonlinear generalization of an upper triangular matrix.

To achieve this property in attention module, we define the function $\mask: \mathbb{R}^{p \times p} \to \mathbb{R}^{p \times p}$ by
\[
\mask(X)_{ij} =
\begin{cases}
         x_{ij} & \text{if } i \le j,\\
        -\infty & \text{if } i > j.
\end{cases}
\]
A \emph{masked attention} module is then given by
\begin{equation}\label{eq:matt}
\beta(X) = V(X) \SoftMax\bigl(\mask (K(X)^\tp Q(X))\bigr).
\end{equation}
It is easy to check that a masked attention module is always autoregressive. 

\subsection{Decoder}\label{sec:dec}

A \emph{decoder block} is the analogue of an encoder block where we have a masked attention in \eqref{eq:encblk}:
\begin{equation}\label{eq:decblk}
\delta = \varphi \circ \beta.
\end{equation}
We may also replace any or all of the $\alpha_i$'s in \eqref{eq:enc} by masked versions $\beta_i$'s. If we replace all, then the resulting map
\begin{equation}\label{eq:dec}
\delta_t = \varphi_t \circ \beta_t \circ \varphi_{t-1} \circ \beta_{t-1} \circ \dots \circ\varphi_1 \circ \beta_1,
\end{equation}
is autoregressive but more generally we will just selectively replace some $\alpha_i$'s with $\beta_i$'s. We call the resulting map a \emph{decoder}. Note that the part enclosed within red dash lines in the right-half of Figure~\ref{fig:trans} is not quite a decoder as it takes a feed from the left-half; instead it is an \emph{encoder--decoder}, as we will discuss next.

\subsection{Encoder--decoder attention}\label{sec:encdec}

The multihead attention in the right-half of Figure~\ref{fig:trans} accepts a feed from outside the red dash box. When used in this manner, it is called  an \emph{encoder--decoder attention module} \cite{vaswani2017attention}, as it permits one to use queries from the decoder, but keys and values from the encoder. Mathematically, this is a map $\gamma: \mathbb{R}^{n\times p} \times \mathbb{R}^{r \times p} \to \mathbb{R}^{m\times p}$,
\begin{equation}\label{eq:encdec}
\gamma(X,Y) \coloneqq V(X)  \SoftMax\bigl(K(X)^\tp Q(Y)\bigr) ,
\end{equation}
where $Q, K, V$ are as in \eqref{eq:QKV} but while $K,V$ are functions of $X$, $Q$ is now a function of $Y$. The independent matrix variables $X$ and $Y$ take values in $\mathbb{R}^{n \times p}$ and $\mathbb{R}^{r \times p}$  respectively. As a result we have to adjust the dimensions of the weight matrices slightly: $A_Q \in \mathbb{R}^{d \times r}$, $A_K \in \mathbb{R}^{d \times n}$, $A_V \in \mathbb{R}^{m \times n}$. The encoder--decoder attention is \emph{partially autoregressive}, i.e., autoregressive in $Y$ but not in $X$, taking the form
\[
(X, [y_1, \dots, y_p])
  \mapsto  [f_1(X, y_1), f_2(X, y_2), \dots, f_p(X, y_1,\dots,y_p)].
\]

\subsection{Transformer}\label{sec:trans}

An \emph{encoder--decoder block} $\tau :  \mathbb{R}^{n\times p} \times \mathbb{R}^{r \times p} \to \mathbb{R}^{n_{l+1} \times p}$ is defined by a multihead masked attention module $\beta$, a multihead encoder--decoder attention module $\gamma$, and a  neural network $\varphi$, via
\[
\tau(X, Y) = \varphi\bigl(\gamma(X, \beta(Y))\bigr).
\]
An $(s + t)$-layer \emph{encoder--decoder} is then constructed from an  $s$-layer encoder $\varepsilon_s$, and $t$ encoder--decoder blocks given by $\beta_1, \gamma_1, \varphi_1, \dots, \beta_t, \gamma_t, \varphi_t$. We define $\tau_i$ recursively as
\[
\tau_i(X, Y) = \varphi_i\bigl(\gamma_i\bigl(\varepsilon_s(X), \beta_i(\tau_{i-1}(X, Y))\bigr)\bigr)
\]
for $i =1,\dots,t$, $\tau_0(X, Y) = Y$. We call $\tau_t$ the encoder--decoder. For all mathematical intents and purposes, $\tau_t$ is the \emph{transformer}. As we will see in Sections~\ref{sec:norm} and \ref{sec:misc}, the other components in Figure~\ref{fig:trans} or \cite[Figure~1]{vaswani2017attention} are extraneous to the operation of a transformer.

We stress that the word ``transformer'' is sometimes used to refer to just the encoder or the decoder alone. We choose to make the distinction in our article but many do not. For example, Google's BERT \cite{bert}, for Bidirectional Encoder Representations from Transformers, is an encoder whereas OpenAI's GPT \cite{gpt3}, for Generative Pretrained Transformer, is a decoder.

\subsection{ReLU-transformer}

The definitions in Sections~\ref{sec:nn}--\ref{sec:trans} are faithful mathematical transcriptions of components as described in Vaswani et al.\ original article \cite{vaswani2017attention}. In this section we take a small departure --- replacing every occurrence of $\SoftMax$ with $\ReLU$ to obtain what is called a $\ReLU$-transformer. This is not new either but proposed and studied in \cite{bai2023Transformers, wortsman2023replacing}.

We begin by defining $\ReLU$-attention modules. They have the same structures as \eqref{eq:att}, \eqref{eq:matt}, \eqref{eq:encdec} except that $\SoftMax$ is replaced by $\ReLU$, i.e.,
\begin{equation}\label{eq:att1}
\begin{aligned}
\alpha(X) &= V(X) \ReLU\bigl(K(X)^\tp Q(X)\bigr),\\
\beta(X) &= V(X) \ReLU\bigl(\mask(K(X)^\tp Q(X))\bigr),\\
\gamma(X,Y) &= V(X)  \ReLU\bigl(K(X)^\tp Q(Y)\bigr).
\end{aligned}
\end{equation}
An encoder, decoder, or encoder--decoder constructed out of such $\ReLU$-attention modules will be called a $\ReLU$-encoder, $\ReLU$-decoder, or $\ReLU$-encoder--decoder respectively. In particular, a $\ReLU$-transformer is, for all mathematical intents and purposes, a $\ReLU$-encoder--decoder.

These $\ReLU$-activated variants are essentially ``unsmoothed'' versions of their smooth $\SoftMax$-activated cousins in Sections~\ref{sec:nn}--\ref{sec:trans}. We may easily revert to the smooth versions by a simple smoothing process --- replace all $\ReLU$-activated attentions by the original $\SoftMax$-activated ones (but the  neural networks would remain $\ReLU$-activated). 

$\ReLU$-transformers work naturally with our claims and proofs in Section~\ref{sec:equiv}. Nevertheless, even in practice $\ReLU$-transformers can have desirable, possibly superior,  features compared to the original $\SoftMax$-transformers: investigations in \cite{wortsman2023replacing} provided extensive empirical evidence that substituting $\SoftMax$ with $\ReLU$ causes no noticeable loss and occasionally even affords a slight gain in performance across both language and vision tasks; it is also easier to explain the in-context-learning capability of $\ReLU$-transformers \cite{bai2023Transformers}.

More generally, the use of alternative activations in a transformer is a common practice. There are various reasons to replace $\SoftMax$, one of which is to avoid the considerable training cost associated with the use of $\SoftMax$ activation. In \cite{NEURIPS2021_b1d10e7b},  $\SoftMax$ is replaced with a Gaussian kernel; in \cite{koohpayegani2024sima}, only the normalization part of $\SoftMax$ is kept; in \cite{he2023deep}, it is shown that an activation does not need to map into the probability simplex. Linearized attentions are used in  \cite{qin2021cosformer}, and sparse attentions in \cite{peters2019sparse}; these are intended primarily to accelerate the $\SoftMax$ operator but they have other features too.

\subsection{Layer normalization and residual connection}\label{sec:norm}

Comparing our Figure~\ref{fig:trans} and  \cite[Figure~1]{vaswani2017attention}, one might notice that we have omitted the ``add \& norm'' layers.

 The ``add'' step, also called residual connection \cite{he2016deep}, may be easily included in our analysis --- all our results and proofs in Section~\ref{sec:equiv} hold verbatim with the inclusion of residual connection. For an encoder block $\varepsilon : \mathbb{R}^{n \times p} \to \mathbb{R}^{n \times p}$, a residual connection simply means adding the identity map $\iota  : \mathbb{R}^{n \times p} \to \mathbb{R}^{n \times p}$, $X \mapsto X$, i.e.,
\[
\varepsilon + \iota : \mathbb{R}^{n \times p} \to \mathbb{R}^{n \times p}, \quad X \mapsto \varepsilon(X) + X,
\]
and likewise for a decoder block $\delta : \mathbb{R}^{n \times p} \to \mathbb{R}^{n \times p}$. For an encoder--decoder block $\tau : \mathbb{R}^{n \times p} \times \mathbb{R}^{r \times p}\to \mathbb{R}^{r \times p}$, a residual connection simply means adding the projection map $\pi : \mathbb{R}^{n \times p} \times \mathbb{R}^{r \times p}\to \mathbb{R}^{r \times p}$, $(X,Y) \mapsto Y$, i.e.,
\[
\tau + \pi : \mathbb{R}^{n \times p} \times \mathbb{R}^{r \times p}\to \mathbb{R}^{r \times p}, \quad (X, Y) \mapsto \tau(X, Y) + Y.
\]
As will be clear from the proofs in Section~\ref{sec:equiv}, all results therein hold with or without residual connection.

The ``norm'' step, also called layer normalization \cite{ba2016layer} refers to statistical standardization, i.e., mean centering and scaling by standard deviation of each column vector in $X$. This is an ubiquitous process routinely performed in just about any procedure involving any data for practical reasons. But this innocuous process introduces additional nonlinearity that does not fit in our framework. 

We do not consider either of these critical to the workings of a transformer. They are by no means unique and may be easily replaced with other data standardization process, as shown in \cite{he2023deep}.

\subsection{Miscellany}\label{sec:misc}

The ``input/output embedding'' and ``position embedding''  in Figure~\ref{fig:trans} convert sentences or images (or whatever real-world entity the transformer is used for) to an input in $\mathbb{R}^{n \times p} $; the ``linear layer'' and ``SoftMax'' in the right half assign probability values to the output. These are just auxiliary components necessary in any situation involving human-generated input or requiring human-interpretable output. They are common to all practical AI models and we do not regard them as part of the transformer architecture.

\section{Splines}\label{sec:spline}

This section covers the salient aspects of splines relevant for us. We write $\mathbb{R}[x_1, \dots, x_n]$ for the ring of polynomials with real coefficients in variables $ (x_1,\dots,x_n) \eqqcolon x$ and  $\mathbb{R}[x_{11},\dots,x_{np}]$  for that in $ (x_{ij})_{i,j=1}^{n,p} \eqqcolon X$.

Splines have a rich history and a vast literature in applied and computational mathematics, this being precisely the reason we chose them as our platform to understand a new technology like the transformer. Mathematical \emph{splines}, as opposed to the mechanical ones used by draftsmen and shipbuilders, were first named in \cite{SchoenbergPNAS}. A one-line summary of its early history, with many regretful omissions, is that univariate splines were first proposed in \cite{Schoenberg1946}, multivariate splines in \cite{birkhoff1960smooth}, B-Splines in \cite{curry1966polya}, and box splines in \cite{BoxSpline}. 

An important departure of our discussion of splines in this article is that we will not concern ourselves with differentiability, avoiding the usual efforts to ensure that a piecewise-defined function is $C^r$ at points where the different pieces meet. The reason is simple: our results in the next section will show that every continuous spline is a $\ReLU$-transformer (and vice versa) and when presented as such, there is a straightforward and natural way to smooth a spline to any desired degree-of-smoothness $r$, namely, by replacing $\ReLU$ with a $C^r$-activation. So there is no need for us to even introduce the notions of knots, tangential continuity, curvature continuity, etc. Indeed, viewed in this manner, the transformer with its $\SoftMax$ activation is the first example of a ``$C^\infty$-spline'' --- an impossible object in classical constructions of splines as the degree-of-smoothness of a spline can never exceed the degree of its polynomial pieces.

\subsection{Scalar-valued splines}\label{sec:sca}

In its simplest form a spline is a piecewise-polynomial real-valued function $f : \mathbb{R}^n \to \mathbb{R}$ defined over a partition of its domain $\mathbb{R}^n$. The classical and most basic partition is a triangulation, i.e., a subdivision into $n$-dimensional simplices whose union is $\mathbb{R}^n$ and intersecting only along faces; more generally one may also use convex polytopes in place of simplices  \cite{de1978practical, Chui, Handbook}. We will need a slightly more sophisticated partition called a semialgebraic partition \cite{dipasquale2017, dipasquale2020, shek}. For any $b \in \mathbb{N}$, let
\begin{equation}\label{eq:Phi}
\Theta_b \coloneqq \bigl\{ \theta: \{1, \dots, b\} \to \{1, 0, -1\} \bigr\},
\end{equation}
a finite set of size $3^b$. Note that this is really just the set of ternary numerals with $b$ (ternary) bits.
\begin{definition}[Partition]\label{def:par}
Any $\pi_1, \dots, \pi_b \in \mathbb{R}[x_1, \dots, x_n]$ induces a sign partition of $\mathbb{R}^n$ via
\[
\Pi_{\theta} \coloneqq \{x \in\mathbb{R}^n : \sgn(\pi_i(x)) = \theta(i), \; i = 1,\dots,b\}.
\]
Then $\{\Pi_{\theta} : \theta \in \Theta_b \}$ is a partition of $\mathbb{R}^n$, the \emph{semialgebraic partition} induced by $\pi_1,\dots,\pi_b$.
\end{definition}
Note that the domain of $\theta$  in \eqref{eq:Phi} merely serves as a placeholder for any $b$-element set and does not need to be $\{1,\dots,b\}$. Indeed we will usually write $\theta: \{\pi_1, \dots, \pi_b\} \to \{1, 0, -1\}$ to emphasize that it is an index for the partition induced by $\pi_1,\dots,\pi_b$. Any triangulation or partition into polytopes can be obtained by choosing appropriate linear polynomials  $\pi_1,\dots,\pi_b$ so Definition~\ref{def:par} generalizes the basic one that requires partition to be piecewise linear.
\begin{definition}[Spline]\label{def:spline}
Let $\{\Pi_{\theta} : \theta \in \Theta_b \}$ be the semialgebraic partition induced by $\pi_1, \dots, \pi_b \in \mathbb{R}[x_1, \dots, x_n]$. A continuous function $f: \mathbb{R}^n \to \mathbb{R}$ is a \emph{polynomial spline} of degree $k$ if  for each $i =1,\dots,b$,
\begin{enumerate}[label=\upshape{(\roman*)}, nosep]
\item $\pi_i$ has degree not more than $k$;
\item if $\Pi_\theta \ne \varnothing$, then $f$ restricts to a polynomial of degree not more than $k$ on $\Pi_\theta$, i.e.,
$f(x) = \xi_\theta(x)$ for all $x \in \Pi_\theta$, for some $\xi_\theta  \in \mathbb{R}[x_1, \dots, x_n]$ of degree not more than $k$.
\end{enumerate}
\end{definition}
Henceforth, ``spline'' will mean ``polynomial spline,'' ``degree-$k$'' will mean ``degree not more  than $k$,'' and ``partition'' will mean ``semialgebraic partition.'' The small cases $k =1,2,3,5$ are customarily called linear, quadratic, cubic, and quintic splines respectively. The standard notation for the set of all $r$-times differentiable degree-$k$ splines with partition induced by  $\pi_1, \dots, \pi_b$ is  $S_k^r(\pi_1,\dots,\pi_b)$ but since we will only need the case $r = 0$ and splines as defined in Definition~\ref{def:spline} are always continuous, we may drop the superscript $r$.

Observe that $S_k(\pi_1,\dots,\pi_b)$ is a finite-dimensional real vector space. So it is straightforward to extend Definition~\ref{def:spline} to $\mathbb{V}$-valued splines $f : \mathbb{R}^n \to \mathbb{V}$  for any finite-dimensional real vector space $\mathbb{V}$ using tensor product, namely, they are simply elements of $S_k(\pi_1,\dots,\pi_b) \otimes \mathbb{V}$ \cite[Example~4.30]{acta}. For the benefit of readers unfamiliar with tensor product constructions, we go over this below in a concrete manner for $\mathbb{V} = \mathbb{R}^n$ and $\mathbb{R}^{n \times p}$.

\subsection{Vector-valued splines}\label{sec:vec}

A vector-valued degree-$k$ spline $f : \mathbb{R}^n \to \mathbb{R}^m$ is given by
\[
f(x) =   \sum_{i=1}^m f_i(x) e_i \quad\text{for all } x \in \mathbb{R}^n,
\]
where $f_1,\dots,f_m \in S_k(\pi_1,\dots,\pi_b)$ and $e_1,\dots,e_m \in \mathbb{R}^m$ are the standard basis vectors. This is equivalent to requiring $f$ be a degree-$k$ spline coordinatewise, i.e., $f = (f_1,\dots,f_m)$ where $f_1,\dots,f_m \in S_k(\pi_1,\dots,\pi_b)$.

Traditionally, vector-valued splines are the most important class of splines for practical applications. Special cases include spline curves ($n = 1$, $m = 2$ or $3$) and spline surfaces  ($n = 2$, $m = 2$ or $3$), used to parameterize curves and surfaces that pass near a collection of given data points. These are of fundamental importance in computer graphics and computer-aided design  \cite{HCAD, Shikin}. 

\subsection{Matrix-valued splines}

In this case we are interested in splines that are not just matrix-valued but also matrix-variate. One nice feature with our treatment of splines in Section~\ref{sec:sca} is that we can define matrix-variate splines over $\mathbb{R}^{n \times p}$ by simply replacing all occurrences of $\mathbb{R}[x_1, \dots, x_n]$ with $\mathbb{R}[x_{11},\dots,x_{np}]$. A matrix-valued degree-$k$ spline $f : \mathbb{R}^{n \times p} \to \mathbb{R}^{m \times p}$ is then given by
\begin{equation}\label{eq:mat}
f(X)  =   \sum_{i=1}^m \sum_{j=1}^p f_{ij}(X) E_{ij} \quad\text{for all } X \in \mathbb{R}^{n \times p},
\end{equation}
where $f_{ij} \in S_k(\pi_1,\dots,\pi_b)$ and $E_{ij} \in \mathbb{R}^{m \times p}$, $i=1,\dots,m$, $j=1,\dots,p$. Here $E_{ij}$ is the standard basis matrix with one in $(i,j)$th entry and zeros everywhere else.  Again, an alternative but equivalent way to define them would be in a coordinatewise fashion, i.e., $f = (f_{ij})_{i,j=1}^{m,p}$ where $f_{ij} \in S_k(\pi_1,\dots,\pi_b)$, $i=1,\dots,m$, $j=1,\dots,p$. Note that $p=1$ reduces to the case in Section~\ref{sec:vec}.

\subsection{Pierce--Birkhoff conjecture}

Garrett Birkhoff, likely the person first to realize the importance of splines in applications though his consulting work \cite{Young}, also posed one of the last remaining open problems about splines \cite{BP}.
\begin{conjecture}[Pierce--Birkhoff]\label{conj:PB}
For every spline $f : \mathbb{R}^n \to \mathbb{R}$, there exists a finite set of polynomials $\xi_{ij} \in \mathbb{R}[x_1,\dots,x_n]$, $i =1,\dots,m$, $j=1,\dots,p$ such that 
\begin{equation}\label{eq:PB}
f=\max_{i=1,\dots,m}\min_{j=1,\dots,p} \xi_{ij}.
\end{equation}
\end{conjecture}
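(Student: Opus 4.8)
The statement to be established is the Pierce--Birkhoff conjecture of \cite{BP}, and I must be candid: it is open in general, and it is the single place in this article where we will genuinely rely on an unproven hypothesis. What follows is therefore not a proof but the program by which a representation \eqref{eq:PB} is obtained, together with the cases in which that program has been completed.

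\textbf{Reduction to a local statement.} Since $f$ is a spline for the partition induced by $\pi_1,\dots,\pi_b$, only finitely many cells $\Pi_\theta$ and polynomial pieces $\xi_\theta$ are in play, and at each point of $\mathbb{R}^n$ only finitely many of them meet. The first step --- and the one that is clean and unconditional --- is the classical reduction, essentially due to Mah\'e, of \eqref{eq:PB} to a \emph{local} Pierce--Birkhoff property: for every pair of points $a,a'\in\mathbb{R}^n$ one must produce a single polynomial $\xi_{a,a'}$ with $\xi_{a,a'}\le f$ on a neighbourhood of $a'$ and $\xi_{a,a'}\ge f$ on a neighbourhood of $a$. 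Given this, a finite combinatorial argument assembles the global $\max\min$: let the $\xi_{ij}$ range over a suitable finite subcollection of the $\xi_{a,a'}$, use the inner $\min$ to force an upper bound for $f$ near each ``max-type'' point and the outer $\max$ to recover equality everywhere. I would carry out this reduction in full first, as it costs nothing.

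\textbf{Into the real spectrum, and the surface case.} All the content now sits in the local statement, and the modern route, following Madden's abstract Pierce--Birkhoff conjecture, recasts it on the real spectrum $\operatorname{Sper}\mathbb{R}[x_1,\dots,x_n]$: along every maximal chain of orderings the relevant \emph{separating ideal} should be real and controllable, so that the two inequalities defining $\xi_{a,a'}$ can be produced one valuation at a time. For $n=1$ this is elementary --- the pieces are linearly ordered along the line and one telescoping $\max\min$ suffices. For $n=2$ it is Mah\'e's theorem: resolve the singularities of the curves $\{\pi_i=0\}$ and of the ``fault loci'' of $f$ until the whole configuration is normal crossings, at which point the separating ideals become monomial in a toric chart and the inequalities peel off explicitly. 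In general I would try to imitate this --- localize near a point, blow up the discriminant locus of the family $\{\pi_i\}$ together with the fault loci of $f$ until everything is normal crossings, and read off $\xi_{a,a'}$ in the resulting chart.

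\textbf{The obstacle.} This is exactly where a general proof stalls, and I do not expect to get past it. In dimension $\ge 3$ the real spectrum is far more intricate: chains of primes are long, a single blow-up need not tame the separating ideal between two points, and there is no resolution theorem that simultaneously monomializes all the $\pi_i$ and all the fault loci of $f$ in a way compatible with the orderings. The finite bookkeeping that makes the surface case work --- finitely many valuations to handle, each dispatched by one monomialization --- has no known higher-dimensional analogue, and the best available partial results (Lucas--Madden--Schaub--Spivakovsky and later work) establish the abstract conjecture only under extra hypotheses on the valuations that arise. So the honest summary: the reduction to the local, real-spectral statement I would do unconditionally; the local statement I would prove for $n\le 2$ by the Mah\'e-style normal-crossings argument; and for $n\ge 3$ I would, like everyone, be forced to assume it --- which is precisely why it enters our converse, that every spline is a $\ReLU$-encoder, as a hypothesis rather than as a lemma.
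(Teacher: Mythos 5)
This statement is a conjecture, and the paper offers no proof of it either: it is stated as an open hypothesis, known for $n=1,2$ (citing Mah\'e) and assumed in Section~\ref{sec:s=t} for the converse direction. Your account matches the paper's stance exactly --- you correctly decline to prove it, and your summary of the known program (reduction to the local statement, the real-spectrum/separating-ideal formulation, Mah\'e's resolution argument for surfaces, and the obstruction in dimension $\ge 3$) is consistent with the literature the paper cites \cite{Mahe1984,PB1,PB2,PB4}.
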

This conjecture is known to be true for $n = 1$ and $2$ but is open for all higher dimensions \cite{Mahe1984}. Our results in Section~\ref{sec:s=t} will be established on the assumption that the Pierce--Birkhoff conjecture holds true for all $n$, given that there is significant evidence \cite{PB1,PB2,PB4} for its validity. 

The kind of functions on the right of \eqref{eq:PB} we will call \emph{max-definable functions} in the variables $x_1,\dots,x_n$.  These are functions $f : \mathbb{R}^n \to \mathbb{R}$  generated by  $1, x_1, \dots, x_n$ under three binary operations: addition $(x,y) \mapsto x+y$, multiplication $(x,y) \mapsto x \cdot y$, maximization $(x,y) \mapsto \max(x,y)$; and scalar multiplication $x \mapsto \lambda x$ by $\lambda \in \mathbb{R}$. Note that minimization comes for free as $\min(x,y) \coloneqq -\max(-x,-y)$.  Using the identity
$xy^+ = \max\bigl(\min(xy, x^2y+y), \min(0, -x^2y-y)\bigr)$,
any max-definable functions can be reduced to the form $\max_{i=1,\dots,m}\min_{j=1,\dots,p} \xi_{ij}$ with  $\xi_{ij} \in \mathbb{R}[x_1,\dots,x_n]$  \cite{HI}. The notion may be easily extended to matrix-variate, matrix-valued functions $f : \mathbb{R}^{n \times p} \to \mathbb{R}^{m \times p}$ coordinatewise, i.e., by requiring that each $f_{ij} :  \mathbb{R}^{n \times p} \to \mathbb{R}$ be a max-definable function in the variables $x_{11}, x_{12},\dots,x_{np}$.

Clearly, the set of max-definable functions is contained within the set of splines. Pierce--Birkhoff conjecture states that the two sets are equal. Both are examples of an ``$f$-ring'' as defined in \cite{BP}, now christened ``Pierce--Birkhoff ring'' after the two authors. If we drop multiplication from the list of binary operations generating the max-definable functions, the resulting algebraic object is the renown max-plus algebra or tropical semiring \cite{maclagan2015}.

\section{Equivalence of splines and transformers}\label{sec:equiv}

We will show that every component of the transformer defined in Section~\ref{sec:math} is a spline ---  neural network, attention module, masked attention module, encoder block, decoder block, encoder, decoder, encoder--decoder --- so long as they are $\ReLU$-activated. More importantly, if Conjecture~\ref{conj:PB} is true, then the converse also holds in the sense that every spline is an encoder. The equivalence between $\ReLU$-activated feed-forward neural networks and linear splines is well-known \cite{arora2018understanding}. The other equivalences will be established below. Henceforth we will assume $\ReLU$-activation throughout this section and will not specify this unless necessary for emphasis.

\subsection{Transformers are splines}

We will first remind readers of the main result in \cite{arora2018understanding} establishing equivalence between  neural networks and linear splines.
\begin{theorem}[Arora--Basu--Mianjy--Mukherjee]\label{thm:relulinear}
    Every neural network  $\varphi : \mathbb{R}^n \to \mathbb{R}$ is a linear spline, and every linear spline $\ell : \mathbb{R}^n \to \mathbb{R}$ can be represented by a neural network with at most $\lceil\log _2(n+1)\rceil+1$ depth.
\end{theorem}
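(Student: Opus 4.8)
The plan is to prove the two implications separately. The forward direction --- that every $\ReLU$-network $\varphi : \mathbb{R}^n \to \mathbb{R}$ is a linear spline --- is a short structural induction on the number of $\ReLU$ gates (equivalently, on the number of layers). An affine map is a degree-$1$ polynomial and hence a linear spline with trivial partition, and a single $\ReLU$ is the linear spline $\max(x,0)$ whose partition is induced by the linear form $x$. For the inductive step one uses two closure facts: first, an affine combination of finitely many continuous piecewise-linear functions is again continuous piecewise-linear of degree $1$ once one passes to a common refinement of their partitions; second, the composition $g \circ h$ of continuous piecewise-linear maps is continuous piecewise-linear, since on each affine piece of $h$ the preimage of a linear form defining a piece of $g$ is again a linear form. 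Therefore only finitely many linear forms $\pi_1, \dots, \pi_b$ arise, and the semialgebraic partition they induce is one on which $\varphi$ restricts to a degree-$1$ polynomial on each cell --- exactly the requirement of Definition~\ref{def:spline} with $k=1$. Continuity is automatic since $\ReLU$ and affine maps are continuous.

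For the converse --- realizing an arbitrary linear spline $\ell : \mathbb{R}^n \to \mathbb{R}$ by a network of depth at most $\lceil \log_2(n+1) \rceil + 1$ --- I would proceed in three steps. \emph{Step 1:} invoke the lattice representation of continuous piecewise-linear functions due to Ovchinnikov and to Wang and Sun: there exist affine functions $\ell_{ij} : \mathbb{R}^n \to \mathbb{R}$ and signs $s_i \in \{+1,-1\}$ with
\[
\ell \;=\; \sum_{i=1}^{r} s_i \max\{\ell_{i0}, \ell_{i1}, \dots, \ell_{in}\},
\]
so that $\ell$ is a signed sum of maxima of at most $n+1$ affine functions; the bound $n+1$ on the inner arity is the only place where the ambient dimension enters, and it comes from a Helly-type trimming of the local ``selections'' of affine pieces. \emph{Step 2:} build the $\ReLU$ gadget $\max(a,b) = b + \ReLU(a-b)$, which computes the maximum of two affine inputs with a single $\ReLU$ layer (carrying the affine term $b$ past the activation via $b = \ReLU(b) - \ReLU(-b)$); iterating this in a balanced binary tree computes $\max\{\ell_{i0}, \dots, \ell_{in}\}$ with $\lceil \log_2(n+1) \rceil$ $\ReLU$ layers, which together with the final affine layer yields the stated depth. \emph{Step 3:} run the $r$ resulting subnetworks in parallel --- concatenating their hidden layers width-wise, padding any shorter branch to the common depth with the identity gadget $x = \ReLU(x) - \ReLU(-x)$ --- and collapse the $r$ outputs using the signed sum $\sum_i s_i(\cdot)$ in the output affine layer. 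Parallel composition increases width but not depth, which is precisely why the depth bound does not depend on the number of linear pieces of $\ell$.

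The main obstacle is Step 1: proving the dimension-sharp statement that every continuous piecewise-linear $f : \mathbb{R}^n \to \mathbb{R}$ is a signed sum of maxima over exactly $n+1$ affine functions is a genuine piece of polyhedral combinatorics, and it is the source of the logarithmic term in the depth bound. Everything downstream --- the two-input $\max$ and $\min$ gadgets, the logarithmic-depth binary tree, and the depth-preserving parallel composition --- is routine bookkeeping, and the forward direction is essentially immediate once one checks that compositions keep the partition finite and semialgebraic in the sense of Definition~\ref{def:spline}.
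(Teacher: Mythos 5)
The paper offers no proof of this statement---it is imported verbatim from \cite{arora2018understanding}---and your proposal is a correct reconstruction of precisely the argument used there: the forward direction by closure of continuous piecewise-linear functions under affine combination and composition, and the converse via the Ovchinnikov/Wang--Sun representation of a linear spline as a signed sum of maxima of at most $n+1$ affine functions, each maximum computed by a balanced tree of two-input $\max$ gadgets in $\lceil\log_2(n+1)\rceil$ $\ReLU$ layers plus one final affine layer, with parallel branches merged width-wise. So the proposal is correct and takes essentially the same route as the cited source; your only caveat, that Step 1 is the genuinely nontrivial ingredient, is exactly right.
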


Compositions of spline functions are by-and-large uncommon in the literature for reasons mentioned in the beginning --- one usually combines splines by taking sums or linear combinations. Matrix-valued splines also appear to be somewhat of a rarity in the literature. Consequently we are unable to find a reference for what ought to be a fairly standard result about degrees under composition and matrix multiplication, which we state and prove below.
\begin{lemma}\label{lem:comp}
\begin{enumerate}[label=\upshape{(\roman*)}, nosep]
\item\label{it:comp} Let $g : \mathbb{R}^n \to \mathbb{R}^m$ be a spline of degree $k$ and $f : \mathbb{R}^m \to \mathbb{R}^p$ a spline of degree $k'$. Then $f \circ g$ is a spline of degree $kk'$.

\item\label{it:mult} Let $f: \mathbb{R}^{r \times s}  \to \mathbb{R}^{m \times n}$ and $g:  \mathbb{R}^{r \times s} \to \mathbb{R}^{n \times p}$ be splines of degrees $k$ and $k'$. Then $fg :  \mathbb{R}^{r \times s}  \to \mathbb{R}^{m \times p}$, $X \mapsto f(X)g(X)$, is a spline of degree $k + k'$.
\end{enumerate}
\end{lemma}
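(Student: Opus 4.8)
The plan is to prove both parts by unwinding Definition~\ref{def:spline} and passing to a common refinement of the relevant semialgebraic partitions; the only genuine work is in part~\ref{it:comp}, where the piece of the codomain partition that $g$ lands in depends on which domain piece we sit in.

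For part~\ref{it:comp}, I would write $g = (g_1,\dots,g_m)$ with $g_1,\dots,g_m \in S_k(\pi_1,\dots,\pi_b)$ and $f = (f_1,\dots,f_p)$ with $f_1,\dots,f_p \in S_{k'}(\rho_1,\dots,\rho_c)$, where the $\pi_i \in \mathbb{R}[x_1,\dots,x_n]$ have degree $\le k$ and the $\rho_j \in \mathbb{R}[y_1,\dots,y_m]$ have degree $\le k'$. On each nonempty piece $\Pi_\theta$ of the partition of $\mathbb{R}^n$ induced by the $\pi_i$, the map $g$ agrees with a polynomial map $\xi_\theta : \mathbb{R}^n \to \mathbb{R}^m$ whose coordinates have degree $\le k$; on each nonempty piece $\Sigma_\eta$ of the partition of $\mathbb{R}^m$ induced by the $\rho_j$, the map $f$ agrees with a polynomial map $\zeta_\eta : \mathbb{R}^m \to \mathbb{R}^p$ of degree $\le k'$. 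The candidate partition for $f\circ g$ is the semialgebraic partition $\{P_\omega\}$ of $\mathbb{R}^n$ induced by the finite list consisting of $\pi_1,\dots,\pi_b$ together with all pullbacks $\rho_j \circ \xi_\theta$, for $j=1,\dots,c$ and $\theta$ ranging over the (at most $3^b$) indices with $\Pi_\theta \ne \varnothing$. Each $\rho_j\circ\xi_\theta$ has degree $\le k'\cdot k$ since substituting degree-$\le k$ polynomials into a degree-$\le k'$ polynomial yields degree $\le kk'$.

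The verification then goes as follows. Since $\pi_1,\dots,\pi_b$ are among the defining polynomials, each nonempty $P_\omega$ lies in a unique $\Pi_\theta$, which is then nonempty, so $g$ agrees with $\xi_\theta$ on $P_\omega$. Since the pullbacks $\rho_j\circ\xi_\theta$ are also among the defining polynomials, for each fixed $j$ the quantity $\sgn(\rho_j(g(x))) = \sgn((\rho_j\circ\xi_\theta)(x))$ is constant over $x\in P_\omega$; calling this sign pattern $\eta$, we get $g(P_\omega)\subseteq \Sigma_\eta$, and $\Sigma_\eta\ne\varnothing$ because $P_\omega\ne\varnothing$. Hence on $P_\omega$ we have $(f\circ g)(x) = \zeta_\eta(\xi_\theta(x))$, a polynomial map in $x$ of degree $\le kk'$, and $f\circ g$ is continuous because splines are. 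The degenerate cases $k'=0$ (then $f$, hence $f\circ g$, is a constant) and symmetrically $k=0$ are handled separately and trivially; in all remaining cases $\deg\pi_i\le k\le kk'$, so condition (i) of Definition~\ref{def:spline} holds as well. This exhibits $f\circ g$ as a degree-$kk'$ spline of the partition $\{P_\omega\}$.

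Part~\ref{it:mult} is shorter, as no pullback is needed. Write $f$ relative to $\pi_1,\dots,\pi_b\in\mathbb{R}[x_{11},\dots,x_{rs}]$ of degree $\le k$ and $g$ relative to $\rho_1,\dots,\rho_c\in\mathbb{R}[x_{11},\dots,x_{rs}]$ of degree $\le k'$, and pass to the common refinement induced by $\pi_1,\dots,\pi_b,\rho_1,\dots,\rho_c$, a list of polynomials of degree $\le\max(k,k')\le k+k'$. On a nonempty piece $P$ of this refinement, $P$ lies in a nonempty piece of each original partition, so $f$ and $g$ agree on $P$ with matrices of polynomials of degree $\le k$ and $\le k'$; the $(i,l)$-entry of $f(X)g(X)$ is $\sum_j f_{ij}(X)g_{jl}(X)$, a sum of products of such polynomials, hence of degree $\le k+k'$. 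Continuity of $fg$ follows since matrix multiplication is continuous, so $fg$ is a degree-$(k+k')$ spline of the common refinement. The main obstacle, and really the only point needing care in the whole lemma, is the bookkeeping in part~\ref{it:comp}: because the codomain piece containing $g(x)$ varies with the domain piece, one must enlarge the domain-side list of defining polynomials by the finitely many pullbacks $\rho_j\circ\xi_\theta$ and then check that these pullbacks genuinely pin down a single $\Sigma_\eta$ on each refined cell; the degree arithmetic, continuity, and low-degree edge cases are routine.
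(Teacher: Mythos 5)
Your proof of part~\ref{it:comp} is correct and follows essentially the same route as the paper: refine the domain partition by adjoining, for every nonempty cell $\Pi_\theta$ and every codomain partition polynomial $\rho_j$, the pullback $\rho_j\circ\xi_\theta$ through the local polynomial representative of $g$, then observe that on each refined cell the composition agrees with $\zeta_\eta\circ\xi_\theta$, a polynomial of degree at most $kk'$; the paper encodes the choice of $\eta$ via restrictions of the sign function to the subsets $L$ and $H$ of the enlarged polynomial list, while you phrase it as constancy of $\sgn(\rho_j\circ\xi_\theta)$ on each cell, which is the same mechanism written more transparently (and you additionally record the trivial degenerate cases $k=0$ or $k'=0$, which the paper omits). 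Where you genuinely diverge is part~\ref{it:mult}: you argue directly, passing to the common refinement induced by both families of partition polynomials and bounding the degree of each entry $\sum_j f_{ij}g_{jl}$ by $k+k'$, whereas the paper deduces \ref{it:mult} from \ref{it:comp} by composing the spline $(f,g)$ with the polynomial map $(X,Y)\mapsto XY$. Your direct route is slightly sharper as stated: applying \ref{it:comp} literally to that composition (a degree-$\max(k,k')$ spline composed with a quadratic polynomial) only gives the bound $2\max(k,k')$, so the paper's one-line deduction implicitly relies on the same entrywise bidegree bookkeeping you carry out explicitly; the paper's route, on the other hand, buys brevity and reuses the machinery of \ref{it:comp} without introducing a second refinement argument. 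Either way both parts are established, and your write-up has no gaps.
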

\begin{proof}
We first assume that $p = 1$, i.e., $f : \mathbb{R}^m \to \mathbb{R}$ is a  spline of degree $k'$. For a degree-$k$ spline $g = (g_1, \dots, g_m) : \mathbb{R}^n \to \mathbb{R}^m$, we claim that the composition $f\circ g$ is a spline of degree at most $kk'$.

A partition induced by any $\pi_1, \dots, \pi_b$ can be \emph{refined} to $\pi_1, \dots, \pi_b, \pi_{b+1},\dots,\pi_{b + c}$ by adding finitely many polynomials. Any spline in $S_k(\pi_1,\dots,\pi_b)$ is also a spline in $ S_k(\pi_1,\dots,\pi_{b + c})$. By passing through such refinements, we may assume that  $g_1,\dots,g_m$ are defined over a common partition. So let $g_1,\dots,g_m \in S_k(\pi_1, \dots, \pi_b)$ with
\[
g_i(x) = \xi_{i, \theta}(x) \quad\text{for } x \in \Pi_\theta, \quad \theta \in \Theta_b
\]
where $\Theta_b = \bigl\{ \theta : \{\pi_1,\dots,\pi_b\} \to \{-1,0,1\} \bigr\}$.
Let $f \in S_k (\rho_1, \dots, \rho_c)$ with
\[
f(x) = \zeta_\phi(x) \quad\text{for } x \in \Pi_\phi, \quad \phi \in \Phi_c
\]
where $\Phi_c = \bigl\{ \phi : \{\rho_1,\dots,\rho_c\} \to \{-1,0,1\} \bigr\}$.
Let $L \coloneqq \{\pi_1,\dots,\pi_b\}$ and
\[
M \coloneqq L \cup \{ \rho_j \circ (\xi_{1, \theta}, \dots, \xi_{m, \theta}) :  j = 1,\dots, c, \; \theta \in \Theta_b\}. 
\]
Any $\phi: M \to \{1, 0, -1\}$ can be restricted to $L$, giving $\phi\rvert_L: L \to \{1, 0, -1\}$. Let
\[
H \coloneqq \{\rho_j\circ(\xi_{1, \phi\rvert_L}, \dots, \xi_{m, \phi\rvert_L}) :  j=1,\dots, c\} \subseteq L.
\]
Then $\phi$ can also be restricted to $H$, giving $\phi\rvert_H:H \to \{1, 0, -1\}$. For any nonempty $\Pi_\phi$, we have
\[
f\circ g (x) = \zeta_{\phi\rvert_H} \circ (\xi_{1, \phi\rvert_L}, \dots, \xi_{m, \phi\rvert_L}) (x)
\]
for $x \in \Pi_\phi$ where $\phi \in \Phi_c$. So $f \circ g \in S_{kk'} (M)$. This shows \ref{it:comp} for $p = 1$. For general $p$, we may again assume, by passing through a refinement if necessary, that $f_1,\dots,f_p$ share a common partition, we then apply the same argument coordinatewise.

We then deduce \ref{it:mult} from \ref{it:comp}, by composing the spline $(f, g)$ with the polynomial (and therefore spline) function $(X, Y) \to XY$.
\end{proof}

With the ground work laid in Section~\ref{sec:math}, i.e., having the components of a transformer rigorously defined, it becomes relatively straightforward to show that these components are all splines.
\begin{theorem}[Components of a transformer as splines]\label{thm:main1}\hfill
\begin{enumerate}[label=\upshape{(\roman*)}, nosep]
\item An attention module is a cubic spline.
\item\label{it:mask} A masked attention module is a cubic spline.
\item\label{it:edatt} An encoder--decoder attention module is a cubic spline.
\item An encoder block is a cubic spline. 
\item\label{it:decblk} A decoder block is a cubic spline. 
\item\label{it:edblk} An encoder--decoder block is a quintic spline. 
\item\label{it:enc} A $t$-layer encoder is a spline of degree $3^t$. 
\item\label{it:dec} A $t$-layer decoder is a spline of degree $3^t$. 
\item\label{it:encdec} An encoder--decoder with $s$-layer of encoder blocks and $t$-layer of encoder--decoder blocks is a spline of degree $3^{t+s} + 3^t - 3^s$.
\end{enumerate}
\end{theorem}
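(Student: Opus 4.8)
The plan is to dispatch the nine items in order, doing all degree bookkeeping through Lemma~\ref{lem:comp} and using Theorem~\ref{thm:relulinear} to treat every $\ReLU$-neural network as a degree-$1$ (linear) spline. Three small facts will be used throughout: (a) the affine layers $Q,K,V$ of \eqref{eq:QKV} are degree-$1$ splines, being polynomial, the coordinatewise $\ReLU:\mathbb{R}^{p\times p}\to\mathbb{R}^{p\times p}$ is a degree-$1$ spline, and so is the composite $\ReLU\circ\mask$, whose $(i,j)$ entry is $x_{ij}^+$ for $i\le j$ and $0$ for $i>j$ --- this is how $\mask$ always enters \eqref{eq:att1}, and $\ReLU(-\infty)=0$ makes it a genuine linear spline --- so masking never changes degree; (b) stacking $h$ splines of degree $k$ as in \eqref{eq:h} gives a degree-$k$ spline, by passing to a common refinement as in the proof of Lemma~\ref{lem:comp} and stacking blockwise; (c) for a map of block variables $(X,Y)$ one counts degree in all entries of $(X,Y)$ at once. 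Items (i)--(iii) then follow immediately: each of $\alpha,\beta,\gamma$ has the form ``$V\cdot(\ReLU$ or $\ReLU\circ\mask)(K^\tp Q)$'', and Lemma~\ref{lem:comp}\ref{it:mult} gives $\deg(K^\tp Q)=1+1=2$, Lemma~\ref{lem:comp}\ref{it:comp} gives $\deg\ReLU(K^\tp Q)=2\cdot1=2$, and a second application of \ref{it:mult} gives $\deg\bigl(V\cdot\ReLU(K^\tp Q)\bigr)=1+2=3$. Items (iv) and (v) are then immediate: an encoder (resp.\ decoder) block is $\varphi\circ\alpha$ (resp.\ $\varphi\circ\beta$) with $\varphi$ of degree $1$, so Lemma~\ref{lem:comp}\ref{it:comp} gives degree $3\cdot1=3$.

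Item (vi) is the first place the crude bound $3\cdot3=9$ from Lemma~\ref{lem:comp}\ref{it:comp} is not sharp, and the fix is to open up $\gamma$. Write $\tau(X,Y)=\varphi\bigl(V(X)\,\ReLU\bigl(K(X)^\tp Q(\beta(Y))\bigr)\bigr)$ (for one head; the multihead case follows by (b)). Here $\beta(Y)$ is degree $3$ in $Y$ by (ii), so $Q(\beta(Y))$ is degree $3$; hence $K(X)^\tp Q(\beta(Y))$ is degree $1+3=4$ by Lemma~\ref{lem:comp}\ref{it:mult}, being degree $1$ in the ``$X$-factor'' $K$ and degree $3$ in the ``$Y$-factor'' $Q$; the $\ReLU$ keeps degree $4$; left-multiplying by $V(X)$ raises it to $1+4=5$; and the degree-$1$ $\varphi$ keeps it at $5$. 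So the encoder--decoder block is quintic. The key is that attention is degree $1$ on the ``$V,K$ side'' and inflates only on the ``$Q$ side''.

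Items (vii) and (viii): a $t$-layer encoder (resp.\ decoder) is a composition of $t$ cubic blocks, so iterating Lemma~\ref{lem:comp}\ref{it:comp} $t-1$ times gives degree $3^t$. Item (ix) is the main computation. Let $d_i$ be the degree of $\tau_i$ in $(X,Y)$; then $d_0=1$ since $\tau_0(X,Y)=Y$. Using $\tau_i(X,Y)=\varphi_i\bigl(\gamma_i\bigl(\varepsilon_s(X),\beta_i(\tau_{i-1}(X,Y))\bigr)\bigr)$ and expanding $\gamma_i$ as in (vi): $\varepsilon_s(X)$ is degree $3^s$ by (vii), so $V(\varepsilon_s(X))$ and $K(\varepsilon_s(X))$ are degree $3^s$; $\beta_i(\tau_{i-1}(X,Y))$ is degree $3d_{i-1}$ by (ii) and Lemma~\ref{lem:comp}\ref{it:comp}, hence so is $Q(\beta_i(\tau_{i-1}(X,Y)))$; thus $K(\varepsilon_s(X))^\tp Q(\beta_i(\tau_{i-1}(X,Y)))$ is degree $3^s+3d_{i-1}$, the $\ReLU$ preserves this, and left-multiplying by $V(\varepsilon_s(X))$ then applying the degree-$1$ $\varphi_i$ gives
\[
d_i = 2\cdot 3^s + 3\,d_{i-1}.
\]
Rewriting this as $d_i+3^s=3(d_{i-1}+3^s)$ yields $d_i+3^s=3^i(1+3^s)$, i.e.\ $d_i=3^i+3^{i+s}-3^s$; with $i=t$ this is $3^{t+s}+3^t-3^s$, as claimed. (As a sanity check, $s=0$, $t=1$ gives $5$, matching item (vi).)

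The only real obstacle is resisting the temptation to apply the composition bound $kk'$ everywhere: for items (vi) and (ix) one must instead track the degree in the $X$-slot separately from the degree in the $Y$-slot of the encoder--decoder attention, getting the additive bound $\deg V+\deg(K^\tp Q)$ rather than a product, and then set up and solve the first-order linear recurrence in (ix). Everything else is routine, the one recurring nuisance being the need to pass to a common refinement whenever splines are stacked or multiplied --- handled exactly as in the proof of Lemma~\ref{lem:comp}.
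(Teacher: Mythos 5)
Your proposal is correct and follows essentially the same route as the paper: degree bookkeeping via Lemma~\ref{lem:comp} with neural networks treated as linear splines by Theorem~\ref{thm:relulinear}, the key step being to open up the encoder--decoder attention so that the degrees coming from the $V,K$ (encoder) side and the $Q$ (decoder) side add rather than multiply. The only cosmetic differences are that you solve the recurrence $d_i = 2\cdot 3^s + 3d_{i-1}$ in closed form where the paper verifies the same identity by induction on $t$, and that you spell out the $\ReLU\circ\mask$ and multihead-stacking details that the paper leaves implicit.
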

\begin{proof}
Let $f,g : \mathbb{R}^n \to \mathbb{R}$ be splines of degree $k$. Since $x + y$, $\max (x, y)$ are linear spline, it follows from Lemma~\ref{lem:comp}\ref{it:comp} that $f+g$ and $\max(f, g)$ are splines of degree $k$. In the attention module, $K(X), Q(X), V(X)$ are linear splines, it follows from Lemma~\ref{lem:comp}\ref{it:mult} that $K(X)^\tp Q(X)$ is a quadratic spline. Hence $\ReLU(K(X)^\tp Q(X)) = \max(K(X)^\tp Q(X),0) $ is also a quadratic splines and $\alpha(X) = V(X)\ReLU(K(X)^\tp Q(X))$ is a cubic spline. Similarly, the masked attention $\beta(X)$ and encoder--decoder attention $\gamma(X,Y)$ in \eqref{eq:att1} are also cubic splines. Note that the encoder--decoder attention is a quadratic spline with respect to the first variable $X$, and a linear spline with respect to the second variable $Y$; but overall it is a cubic spline with respect to $(X,Y)$.

A  neural network is a linear spline by Theorem~\ref{thm:relulinear}. So the encoder block in \eqref{eq:encblk} and decoder block in \eqref{eq:decblk} remain cubic splines by Lemma~\ref{lem:comp}\ref{it:comp}. The encoder--decoder block $\tau(X, Y) = \varphi(\gamma(X, \beta(Y)))$ is quadratic in $X$ and cubic in $Y$, and thus quintic in $(X,Y)$. Since a $t$-layer encoder or decoder is a composition of (masked) attention modules and neural networks, it is a spline of degree $3^t$. For an encoder--decoder with $s$ layers of encoder blocks and $t$ layers of encoder--decoder blocks, induction on $t$ gives $2\times 3^s + 3\times (3^{t+s-1} + 3^{t-1} - 3^s) = 3^{t+s} + 3^t - 3^s$
as its degree.
\end{proof}
The splines in \ref{it:mask}, \ref{it:edatt}, \ref{it:decblk}, \ref{it:dec} are autoregressive and those in \ref{it:edblk} and \ref{it:encdec}  partially autoregressive. The term ``autoregressive spline'' does appear in the literature but it is used in a sense entirely unrelated to \eqref{eq:auto}. We will have more to say about this in Corollary~\ref{cor:auto}.

\subsection{Veronese map}\label{sec:vero}

The degree-$k$ Veronese embedding $v_k$ is a well-known map in algebraic geometry \cite[pp.~23--25]{Harris} and polynomial optimization \cite[pp.~16--17]{Lasserre}.  Informally it is the map that takes variables $x_1,\dots,x_n$ to the monomials of degree not more than $k$ in $x_1,\dots,x_n$. This defines an injective smooth function
\begin{equation}\label{eq:ver}
v_k: \mathbb{R}^n \to \mathbb{R}^{\binom{n+k}{k}},\quad
(x_1, \dots, x_n) \mapsto (1, x_1, \dots, x_n, x_1^2, x_1x_2, \dots, x_n^k).
\end{equation}
The value $\binom{n+k}{k}$ gives the number of monomials  in $n$ variables of degree not more than $k$. Two simple examples: $v_k :  \mathbb{R} \to \mathbb{R}^k$,
$v_k(x) = (1, x, x^2, \dots, x^k)$;
$v_2 :  \mathbb{R}^2 \to \mathbb{R}^6$,
$v_2(x,y) = (1, x, y, x^2, xy, y^2)$.

In algebraic geometry \cite[pp.~23--25]{Harris} the Veronese map is usually defined over projective spaces whereas in polynomial optimization  \cite[pp.~16--17]{Lasserre} it is usually defined over affine spaces as in \eqref{eq:ver}. Nevertheless this is a trivial difference as the former is just a homogenized version of the latter.

As is standard in algebraic geometry and polynomial optimization alike, we leave out the domain dependence from the notation $v_k$ to avoid clutter, e.g., the quadratic Veronese map $v_2 :  \mathbb{R}^2 \to \mathbb{R}^6$ and $v_2 :  \mathbb{R}^6 \to \mathbb{R}^{28}$ are both denoted by $v_2$. This flexibility allows us to compose Veronese maps and speak of $v_k \circ v_{k'}$ for any $k,k' \in \mathbb{N}$. For example we may write $v_2 \circ v_2 : \mathbb{R}^2 \to \mathbb{R}^{28}$, using the same notation $v_2$ for two different maps.

The Veronese map is also defined over matrix spaces: When applied to matrices, the Veronese map simply treats the coordinates of an $n \times p$ matrix as $np$ variables. So $v_k: \mathbb{R}^{n \times p} \to \mathbb{R}^{\binom{np+k}{k}}$ is given by
\[
v_k(X ) =  (1, x_{11}, x_{12}, \dots, x_{np}, x_{11}^2, x_{11}x_{12}, \dots, x_{np}^k).
\]
For example  $v_2 :  \mathbb{R}^{2 \times 2} \to \mathbb{R}^{15}$ evaluated on $\begin{bsmallmatrix}x & y \\ z & w \end{bsmallmatrix}$ gives
\[
(1, x, y, z, w, x^2, xy, xz, xw, y^2, yz, yw, z^2, zw, w^2).
\]

An important observation for us is the following.
\begin{lemma}\label{lem:comp1}
Let $k, k' \in \mathbb{N}$. Then every coordinate of $v_{kk'}(X)$ occurs in $v_k(v_{k'}(X))$.
\end{lemma}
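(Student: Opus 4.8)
The plan is to compare the two monomial lists coordinate by coordinate. Recall that $v_{k'}(X)$ has as its coordinates all monomials of degree $\le k'$ in the $np$ variables $x_{11},\dots,x_{np}$, including the constant $1$. Now $v_k(v_{k'}(X))$ treats the $N \coloneqq \binom{np+k'}{k'}$ coordinates of $v_{k'}(X)$ as new variables and forms all monomials of degree $\le k$ in those. So a typical coordinate of $v_k(v_{k'}(X))$ is a product of at most $k$ of the monomials appearing in $v_{k'}(X)$, each of which has degree $\le k'$ in the original $x_{ij}$; hence it is \emph{some} polynomial in the $x_{ij}$ of degree $\le kk'$ — but not necessarily a monomial, and not necessarily with coefficient $1$. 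The point to verify is that \emph{every} monomial in the $x_{ij}$ of degree $\le kk'$ arises as one of these coordinates (as a coordinate, i.e.\ literally equal to it, not merely as a summand).

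First I would fix a monomial $\mu = \prod_{i,j} x_{ij}^{a_{ij}}$ with total degree $d \coloneqq \sum_{i,j} a_{ij} \le kk'$. The task is to write $\mu$ as a product of at most $k$ monomials each of degree $\le k'$ in the $x_{ij}$, since such a product is exactly one of the coordinates of $v_k(v_{k'}(X))$ (recall $1$ is among the coordinates of $v_{k'}(X)$, so ``at most $k$ factors'' is fine — pad with $1$'s). Concretely, list the exponent data of $\mu$ as a word of length $d$ in the alphabet $\{x_{ij}\}$ (the variable $x_{ij}$ repeated $a_{ij}$ times, in any fixed order), then chop this word into consecutive blocks of length $\le k'$. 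Since $d \le kk'$, at most $\lceil d/k' \rceil \le k$ blocks result. Each block, read as a product of its letters, is a monomial of degree $\le k'$ in the $x_{ij}$, hence is one of the coordinates of $v_{k'}(X)$; the product of these blocks (padded with $1$'s up to $k$ factors) is therefore a coordinate of $v_k(v_{k'}(X))$, and that product equals $\mu$. This handles $\mu = 1$ too (take all blocks equal to $1$).

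The only genuinely delicate point is bookkeeping rather than mathematics: one must make sure that ``every coordinate of $v_{kk'}$ occurs'' is read as an honest coordinate-wise statement, and that the coordinates of $v_k(v_{k'}(X))$ include products with fewer than $k$ factors (which they do, precisely because the constant monomial $1$ is one of the entries of $v_{k'}(X)$, so multiplying by extra $1$'s changes nothing). I would also note explicitly that the degree bound $d \le kk'$ is exactly what guarantees the word of length $d$ splits into $\le k$ blocks of size $\le k'$; if one only knew $d \le k + k'$ or the like the statement would fail. No convergence, no continuity, no appeal to the partition structure of splines is needed — it is purely a combinatorial identity about monomials, so I do not anticipate a real obstacle; the write-up is essentially the paragraph above made precise.
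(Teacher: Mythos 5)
Your proof is correct and takes essentially the same route as the paper, whose entire proof is the one-line observation that any monomial of degree not more than $kk'$ can be written as a product of $k$ monomials each of degree not more than $k'$; your word-chopping argument (with padding by the constant coordinate $1$) is just that observation made explicit. (One harmless slip in an aside: a coordinate of $v_k(v_{k'}(X))$ is in fact always a monomial with coefficient $1$ in the $x_{ij}$, since a product of monomials is a monomial, so your parenthetical ``not necessarily a monomial'' is inaccurate but irrelevant to the containment you actually prove.)
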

\begin{proof}
This is a consequence of the observation that any monomial of degree not more than $kk'$ can be written as a product of $k$ monomials, each with degree not more than $k'$.
\end{proof}

Another result that we will need is the following equivalent formulation of Pierce--Birkhoff conjecture in terms of Veronese map.
\begin{lemma}\label{lem:PB}
The Pierce--Birkhoff conjecture holds if and only if for any spline $f : \mathbb{R}^n \to \mathbb{R}$, there exist $k \in \mathbb{N}$ and a linear spline $\ell :  \mathbb{R}^{\binom{n+k}{k}} \to \mathbb{R}$ such that $f = \ell \circ v_k$.
\end{lemma}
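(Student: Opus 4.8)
The plan is to prove both directions of the equivalence, translating between the ``$\max$-$\min$ of polynomials'' formulation in Conjecture~\ref{conj:PB} and the ``linear spline composed with Veronese'' formulation. The key bridge is the observation that a polynomial $\xi \in \mathbb{R}[x_1,\dots,x_n]$ of degree $\le k$ is precisely a \emph{linear} functional applied to $v_k(x)$: if $\xi(x) = \sum_{|\mathbf{a}|\le k} c_{\mathbf{a}} x^{\mathbf{a}}$, then $\xi = \lambda \circ v_k$ where $\lambda$ is the linear map reading off the coefficient vector $(c_{\mathbf{a}})$ against the monomial coordinates of $v_k$. So a polynomial in $x$ is the ``degree-$1$'' case of a spline in $v_k(x)$, and conversely.

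For the ``only if'' direction, assume Pierce--Birkhoff holds and let $f : \mathbb{R}^n \to \mathbb{R}$ be a spline. By Definition~\ref{def:spline} each polynomial piece $\xi_\theta$ has degree $\le k$ for some common $k$ (take $k$ to be the max of all the degrees of the $\pi_i$ and the $\xi_\theta$). By the conjecture, $f = \max_i \min_j \xi_{ij}$ for polynomials $\xi_{ij}$; we may need to re-examine whether these $\xi_{ij}$ have degree $\le k$ or only $\le$ some larger $k''$ — the reduction in the excerpt using the identity $xy^+ = \max(\min(xy,x^2y+y),\min(0,-x^2y-y))$ can raise degrees, so I would simply let $k$ be large enough to bound all the $\xi_{ij}$. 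Now write each $\xi_{ij} = \lambda_{ij} \circ v_k$ with $\lambda_{ij}$ linear. Since $\max$ and $\min$ of finitely many linear functions is a linear spline (a piecewise-linear continuous function), the map $\ell \coloneqq \max_i \min_j \lambda_{ij} : \mathbb{R}^{\binom{n+k}{k}} \to \mathbb{R}$ is a linear spline, and $f = \ell \circ v_k$ by construction.

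For the ``if'' direction, suppose that for every spline $f$ there exist $k$ and a linear spline $\ell$ with $f = \ell \circ v_k$. By Theorem~\ref{thm:relulinear} (Arora--Basu--Mianjy--Mukherjee), every linear spline on $\mathbb{R}^N$ can be written as a $\ReLU$-network, hence as a $\max$-$\min$ of affine functions — concretely, a continuous piecewise-linear function $\mathbb{R}^N\to\mathbb{R}$ is a $\max$ of $\min$s of affine functions (this is the classical lattice representation of PL functions, and is also implicit in the cited result). Composing such an affine function $a(y) = \langle c, y\rangle + c_0$ with $v_k$ gives a polynomial of degree $\le k$ in $x$; $\max$ and $\min$ commute with this composition; so $f = \ell \circ v_k = \max_i \min_j (a_{ij}\circ v_k)$ is a $\max$-$\min$ of polynomials, which (after the degree-reduction identity from the excerpt, if one insists on the exact form \eqref{eq:PB}) is exactly the Pierce--Birkhoff conclusion. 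Applying this to an arbitrary spline $f$ yields the conjecture.

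\textbf{Main obstacle.} The routine algebra (unwinding $v_k$, checking $\max$/$\min$ commute with composition, bounding degrees) is mechanical. The one genuinely load-bearing input is the fact that a \emph{linear spline} is the same thing as a $\max$-$\min$ (equivalently $\max$ over groups of $\min$s, i.e.\ a ``min-max'' lattice) expression in affine functions — i.e.\ that continuous piecewise-linear functions admit a finite lattice representation. For the ``only if'' direction this is elementary (finitely many affine pieces, take $\max$ of the $\min$ over each cell); for ``if'' we lean on Theorem~\ref{thm:relulinear} to realize the linear spline $\ell$ explicitly. A secondary subtlety is degree bookkeeping: the identity $xy^+ = \max(\min(xy,x^2y+y),\min(0,-x^2y-y))$ used to normalize $\max$-definable functions into the precise shape \eqref{eq:PB} can inflate the polynomial degree, so the statement as phrased ``there exist $k \in \mathbb{N}$'' is doing real work — we do not get to control $k$ in terms of the degree of $f$, only its existence. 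I would make sure the proof only claims existence of $k$, not a bound.
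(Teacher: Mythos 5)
Your proof is correct and takes essentially the same route as the paper: both directions rest on identifying a degree-$\le k$ polynomial with an affine functional composed with $v_k$, together with the classical fact that linear splines are exactly finite $\max$-$\min$ combinations of affine functions (the $k=1$ case of Pierce--Birkhoff), and your caveat that only the existence of $k$ is claimed matches the paper. The only cosmetic difference is that the paper cites the lattice representation of piecewise-linear functions directly (Ovchinnikov) rather than routing it through Theorem~\ref{thm:relulinear}.
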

\begin{proof}
Firstly note that Pierce--Birkhoff conjecture holds for $k =1$: Any linear spline $\ell$ can be represented in the form $\min_i \max_j \xi_{ij}$ where $\xi_{ij}$ are linear polynomials \cite{Ovchinnikov2002}. Conversely, if $\ell$ can be represented in the form $\min_i \max_j \xi_{ij}$, then it is clearly a linear spline. 

Assuming that Pierce--Birkhoff conjecture holds in general, then any polynomial spline $f$ can be written as $\min_i \max_j \xi_{ij}$, which is a linear spline over monomials of $\xi_{ij}$, i.e., $f = \ell \circ v_k$ for some linear spline $\ell$. Conversely, if every polynomial spline $f$ can be written as $\ell \circ v_k$ for some linear spline $\ell$, then since $\ell$ can always be written as $\ell = \min_i \max_j \xi_{ij}$, we have $f = \min_i \max_j \xi_{ij}\circ v_k$ for some linear polynomials $\xi_{ij}$'s. Thus we recover the statement of Pierce--Birkhoff conjecture.
\end{proof}

Observe that Lemma~\ref{lem:PB} applies verbatim to matrix-variate splines $f: \mathbb{R}^{n \times p} \to \mathbb{R}$, except that $n$ would have to be replaced by $np$ throughout and we have
\begin{equation}\label{eq:vl}
v_k: \mathbb{R}^{n \times p} \to \mathbb{R}^{\binom{np+k}{k}}, \quad \ell :  \mathbb{R}^{\binom{np+k}{k}} \to \mathbb{R}.
\end{equation}

\subsection{Splines are transformers}\label{sec:s=t}

We will show that any matrix-valued spline $f : \mathbb{R}^{n \times p} \to \mathbb{R}^{r \times p}$ is an encoder. First we will prove two technical results. We will use $i,\ih,\ib, j, \jh, \jb$ to distinguish between indices. We remind the reader that $x^+ \coloneqq \ReLU(x)$.
\begin{lemma}[Quadratic Veronese as encoders]\label{lem:main2}
Let $v_2 : \mathbb{R}^{n \times p}  \to \mathbb{R}^{(np+2)(np+1)/2}$ be the quadratic Veronese map.
There exists a two-layer encoder $\varepsilon_2 : \mathbb{R}^{n \times p} \to \mathbb{R}^{n_2 \times p}$ such that every column of $\varepsilon_2(X)$ contains a copy of $v_2(X)$ in the form
\[
\varepsilon_2(X) =
\begin{bmatrix}
v_2(X) & 0 & \cdots & 0 \\
0 & v_2(X) & \cdots & 0 \\
\vdots & \vdots & \ddots & \vdots \\
0 & 0 & \cdots & v_2(X) \\
\end{bmatrix}  \in \mathbb{R}^{n_2 \times p}.
\] 
More precisely, there is a $h_1$-headed attention module
$\alpha_1 :  \mathbb{R}^{n \times p} \to \mathbb{R}^{mh_1 \times p}$,
a one-layer  neural network,
$\varphi_1 : \mathbb{R}^{mh_1 \times p} \to \mathbb{R}^{n_1 \times p}$,
a $h_2$-headed attention module
$\alpha_2 :  \mathbb{R}^{n_1 \times p} \to \mathbb{R}^{mh_2 \times p}$,
and another one-layer  neural network
$\varphi_2 : \mathbb{R}^{mh_2 \times p} \to \mathbb{R}^{n_2 \times p}$,
such that
\begin{equation}\label{eq:enc2}
\varepsilon_2 = \varphi_2 \circ \alpha_2 \circ \varphi_1 \circ \alpha_1.
\end{equation}
In particular, any monomial of degree not more than two in the entries of $X$ appears in every column of $\varepsilon_2(X)$.
\end{lemma}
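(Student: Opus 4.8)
The plan is to build $\varepsilon_2=\varphi_2\circ\alpha_2\circ\varphi_1\circ\alpha_1$ layer by layer, exploiting the freedom in the number of heads $h_1,h_2$ and the column-dependent biases $B_Q,B_K,B_V$ of an attention module, together with three elementary facts: $x=x^+-x^-$ (so a one-layer $\ReLU$-network realizes any fixed linear rearrangement, including permutations), the polarization identity $x_\mu x_\nu=\tfrac14\bigl((x_\mu+x_\nu)^2-(x_\mu-x_\nu)^2\bigr)$, and $t^2=\ReLU(t^2)$ (so a perfect square of a linear form is \emph{exactly} a $\ReLU$ of a quadratic, with no kink). First I would construct $\alpha_1$ with two families of heads. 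Heads of the first family have \emph{constant} query and key layers, so that the attention-weight matrix $\ReLU(K(X)^\tp Q(X))$ is a fixed matrix; choosing it to be $e_a\mathbf 1^\tp$, $e_ae_a^\tp$, or $I_p$ (each realizable with $d=1$ or $d=p$ and constant $K,Q$), and choosing $V(X)=X$ or $V(X)=I_p$, a single such head can (i) broadcast the $a$-th column $x_a$ of $X$ into a designated coordinate block of \emph{every} output column; (ii) place $x_a$ into the $a$-th output column and zeros elsewhere; or (iii) write the one-hot column marker $e_b$ into output column $b$. Stacking these, $\alpha_1$ produces in each output column $b$ a vector $Z_{:,b}$ containing a full flattened copy of $X$, the marker $e_b$, a block-localized copy of $x_b$, and a constant $1$ --- all of degree $\le 1$ in $X$. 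Heads of the second family use $K(X)_{:,a}=e_i^\tp x_a$, $Q(X)_{:,b}=\pm e_j^\tp x_b$, $V(X)=I_p$, which deposit the rectified bilinear forms $(x_{ia}x_{jb})^{\pm}$ into output column $b$.

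Then $\varphi_1$, a one-layer columnwise $\ReLU$-network, recombines rectified parts via $t^+-t^-=t$ to turn the $(x_{ia}x_{jb})^{\pm}$ into honest products $x_{ia}x_{jb}$, appends signed-rectified copies $\ell(\cdot)^+,\ell(\cdot)^-$ of the linear forms that will be multiplied in $\alpha_2$ (so the bilinear forms $K_2^\tp Q_2$ formed there land where $\ReLU$ is the identity), and retains all markers and constants. Next I would build $\alpha_2$ as the module that assembles the block-diagonal output. Because $Z_{:,a}$ now carries a full copy of $X$, the maps $K_2(Z)_{:,a}$ and $Q_2(Z)_{:,b}$ can be arbitrary affine forms in \emph{all} entries of $X$; hence $K_2(Z)_{:,a}^\tp Q_2(Z)_{:,b}$ can be made equal to any prescribed perfect square $(x_\mu\pm x_\nu)^2\ge0$, so $\ReLU$ acts trivially and such a head outputs that square times a column-sum of $V_2$. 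Using in addition the markers read through the $Q_2$-bias, the heads are organized by target output column $b_0$: each is localized to column $b_0$ (its attention-weight matrix is forced to $e_{b_0}e_{b_0}^\tp$ on the constant/marker part), so that stacking them over $b_0$ yields a block-diagonal matrix whose $b_0$-th block is a prescribed vector built affinely from $Z_{:,b_0}$.

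Constants and linear monomials are gated onto block $b_0$ harmlessly, since the indicator $\mathbf 1[b=b_0]$ is bounded; the quadratic monomials are obtained by polarizing the perfect squares produced by the square-heads. Finally $\varphi_2$, a one-layer columnwise $\ReLU$-network, carries out the polarization, the sign recombinations $t^+-t^-$, and the fixed permutation of rows that displays the result exactly as the block-diagonal matrix of the statement, $\mathrm{diag}(v_2(X),\dots,v_2(X))$. The last assertion is then immediate: since $v_2(X)$ lists every monomial of degree $\le 2$ in the entries of $X$, each such monomial occurs in every column of $\varepsilon_2(X)$.

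The step I expect to be the main obstacle is the orchestration inside $\alpha_2$. A single attention layer's query and key maps see only one column of $X$ at a time, so the cross-column quadratic monomials $x_{ia}x_{jc}$ with $a,c\ne b$ cannot be broadcast to output column $b$ by $\alpha_1$ alone; it is precisely the broadcast block written by $\alpha_1$ into $Z$ that lets $\alpha_2$ form arbitrary quadratics of all of $X$. But $\alpha_2$ must simultaneously (a) form such quadratics, which needs an $X$-dependent attention-weight matrix, and (b) localize them into a single block $b_0$, which wants a constant attention-weight matrix; reconciling these is the delicate point. The plan is to handle it by keeping all the $X$-dependence in perfect-square form --- so no spurious $\ReLU$ kinks are introduced and the output is a genuine polynomial --- while performing the block localization entirely through the \emph{bounded} column markers, and by letting $h_1,h_2,n_1,n_2$ grow as needed to carry the redundant information (broadcasts, markers, localized copies, and all polarization terms) that this requires.
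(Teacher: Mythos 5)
Your overall architecture (two attention layers interleaved with one-layer networks, the first layer distributing degree-one data, the second forming quadratics, with $t^+-t^-$ and polarization recombinations in the networks) matches the paper's, and you correctly isolate the crux: cross-column quadratics $x_{\ih\jh}x_{\ib\jb}$ must not only be \emph{formed} but \emph{localized} into block $b_0$ of column $b_0$. But the mechanism you propose for that step fails. If a second-layer head reads both linear factors of a perfect square from the broadcast copy of $X$, then $K_2(Z)_{:,a}$ and $Q_2(Z)_{:,b}$ are the \emph{same} for every $a$ and every $b$ (the broadcast block is identical in all columns and $A_K,A_Q$ are column-blind), so $w_{a,b}\coloneqq\ReLU\bigl([K_2(Z)^\tp Q_2(Z)]_{a,b}\bigr)$ is constant in $b$ and the head's output column $b$, namely $\sum_a V_2(Z)_{:,a}w_{a,b}$, is identical for all $b$; no choice of $V_2$ changes this. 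The markers cannot rescue it inside the head: the coordinates of the inner product $K_2(Z)_{:,a}^\tp Q_2(Z)_{:,b}$ \emph{add}, they do not multiply, so appending marker coordinates only adds a bounded term to the square instead of gating it; and placing $(x_\mu\pm x_\nu)\cdot\mathbf{1}[b=b_0]$ into $Q_2(Z)_{:,b}$ is impossible because it is not an affine function of your $Z_{:,b}$ --- you stored a block-localized copy only of $x_b$, not of the other columns' entries. Nor can $\varphi_2$ do the gating: it is one fixed ReLU map applied columnwise, and exact multiplication of an \emph{unbounded} quadratic by a $0/1$ indicator is not piecewise linear (big-$M$ gating fails on all of $\mathbb{R}^{n\times p}$). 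So as described, your $\varepsilon_2$ delivers $v_2(X)$ broadcast identically in every column rather than the block-diagonal matrix of the lemma --- and the block-diagonal form is exactly what the downstream argument (different $\ell_j\circ v_s$ per column in Theorem~\ref{thm:main2}) requires.

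The missing idea, which is how the paper resolves the tension you flag, is to make the \emph{first} layer's output fully block-diagonal: put a complete copy of $v_1(X)$ --- every entry $x_{\ih\jh}$, from every column, plus the constant --- into block $j$ of column $j$, so that every row of $\varphi_1\circ\alpha_1(X)$ is supported in a single column. This is achieved by heads with \emph{constant} key and query ($B_K=E_{1\jh}$, $B_Q=E_{1j}$, $A_K=A_Q=0$) and value $A_V=E_{1\ih}$, which move an arbitrary entry of $X$ into an arbitrary target column; your first-family heads (broadcast, own-column placement, markers) never move $x_{\ih\jh}$ into a column $j\neq\jh$ in localized form. Then a second-layer head with constant key $B_K=E_{1j}$, query $A_Q=E_{1\ib}$, value $A_V=E_{1\ih}$, reading two rows of the same block, outputs $z_{\ih j}(z_{\ib j})^{\pm}$ and is \emph{automatically} zero outside column $j$ because those rows vanish in all other columns; $\varphi_2$ then finishes with $x(y)^+-x(-y)^+=xy$. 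Localization comes free from the support structure --- no markers, no squares --- and the ReLU kink is cancelled at the network stage rather than avoided. (Your broadcast-plus-marker route can be repaired by distributing the product across a head --- marker in $Q_2$, one linear factor in $K_2$, the other in $V_2$, with a $\pm$ pair of heads --- but that abandons precisely the perfect-square, kink-free mechanism on which your reconciliation of (a) and (b) rests, and in effect reproduces the paper's construction.)
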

\begin{proof}
We will first construct  a multihead attention module $\alpha$ with the property that each of the $p$ columns of $\alpha(X)$ contains every entry of $X$, i.e., $x_{ij}$, $i=1,\dots,n$, $j=1,\dots,p$. Fix any $(\ih,\jh, j)$ and consider the single-head attention module $\alpha_{\ih\jh j} : \mathbb{R}^{n \times p} \to \mathbb{R}^{m \times p}$ as in \eqref{eq:att} with
\[
A_V = E_{1\ih}, \quad B_V = 0, \quad A_K = 0,\quad
B_K = E_{1\jh}, \quad A_Q = 0, \quad B_Q = E_{1j},
\]
as in \eqref{eq:QKV}. Then the $(1,j)$th entry of $\alpha_{\ih\jh j}(X)$ is exactly $x_{\ih\jh}$ and all other entries in the first row are zeros. If we repeat this for all $\ih \in \{1,\dots,n\}$, $\jh, j \in \{1,\dots,p\}$ and stack these $np^2$ attention modules together, we obtain the multihead attention $\alpha$. By construction any column of $\alpha(X)$ contains every entry of $X$.

For the required $\alpha_1$, we need to augment $\alpha$ so that every column of $\alpha_1(X)$ will also contain the constant $1$. 
Consider the single-head attention module $\alpha^{(j)} : \mathbb{R}^{n \times p} \to \mathbb{R}^{m \times p}$ with
\[
A_V = 0, \quad B_V = E_{11}, \quad A_K = 0,\quad
B_K = E_{11}, \quad A_Q = 0, \quad B_Q = E_{1j}.
\]
Then the $(1,j)$th entry of $\alpha^{(j)}(X)$ is $1$, and all other entries in the first row are zeros. We repeat this for all $j \in \{1,\dots,p\}$ and stack $\alpha^{(1)},\dots,\alpha^{(p)}$ with $\alpha$ to obtain the required $\alpha_1$. Note that $\alpha_1$ has $h_1 = np^2 + p$ heads.

Because $x = \ReLU(x) - \ReLU(-x)$, the coordinate function $f(x_1, \dots, x_n) = x_i = \ReLU(x_i) - \ReLU(-x_i)$ can be represented using a one-layer  neural network. So there exists a one-layer neural network $\varphi_1$ that only keeps all first rows of the above attention modules, and $\varepsilon_1 = \varphi_1 \circ \alpha_1$ gives
\[
\varepsilon_1(X) =
\begin{bmatrix}
v_1(X) & 0 & \cdots & 0 \\
0 & v_1(X) & \cdots & 0 \\
\vdots & \vdots & \ddots & \vdots \\
0 & 0 & \cdots & v_1(X)
\end{bmatrix}  \in \mathbb{R}^{n_1 \times p}.
\]  

We will now construct $\alpha_2$. We first repeat the construction above so that the first $np^2+p$ heads of $\alpha_2$ will produce all linear monomials (i.e., the entries of $X$), and the constant. In particular, by the end of our construction, every column of $\alpha_2 \circ \varphi_1 \circ \alpha_1(X)$ will contain every entry of $X$ and $1$, and each of these entries is the only nonzero entry in its row.

We then construct the next batch of heads of $\alpha_2$ that will produce all quadratic monomials. Consider the attention module $\alpha_{\ih\ib j}$ defined by
\[
A_V = E_{1\ih}, \quad B_V = 0, \quad A_K =0, \quad
B_K = E_{1j}, \quad A_Q =E_{1\ib}, \quad B_Q = 0.
\]
Then the $(1,j)$th entry of $\alpha_{\ih\ib j}(X)$ is $x_{\ih j}(x_{\ib j})^+$. In other words we can form quadratic terms in $j$th column out of entries in $j$th column. If we repeat this for all $\ih, \ib \in \{1,\dots,mh_1\}$, $ j \in \{1,\dots,p\}$, and stack these attention modules together, we obtain a multihead attention $\alpha_2^+$.

By our previous construction, among the rows of $\varphi_1 \circ \alpha_1(X)$ are two with only the $j$th column nonzero, taking values $x_{\ih\jh}$ and $x_{\ib\jb}$ respectively. Composing with $\alpha_{2}$, we obtain a row with $j$th entry $x_{\ih \jh}(x_{\ib \jb})^+$, and other entries zeros. So the composition $\alpha_2^+ \circ \varphi_1 \circ \alpha_1(X)$ contains all quadratic terms of the form $x_{\ih\jh}(x_{\ib\jb})^+$ in any column, and each of those entries is the only nonzero entry in its row. We may repeat the same argument to obtain  a multihead attention $\alpha_2^-$ with the property that  the composition $\alpha_2^- \circ \varphi_1 \circ \alpha_1(X)$ contains all quadratic terms of the form  $x_{\ih\jh}(-x_{\ib\jb})^+$ in any column, and each of those entries is the only nonzero entry in its row. The required $\alpha_2$ is then obtained by stacking $\alpha_2^+$, $\alpha_2^-$, together with the first $np^2 + p$ heads that give the linear monomials and constant. 

The one-layer  neural network $\varphi_2$ is then chosen so that it gives the quadratic monomial
\[
x_{\ih\jh}x_{\ib\jb} = x_{\ih\jh}(x_{\ib\jb})^+ - x_{\ih\jh}(-x_{\ib\jb})^+,
\]
for $\ih,\ib \in \{1,\dots,n\}$ and $\jh, \jb \in \{1,\dots,p\}$.
\end{proof}

Recall from Section~\ref{sec:nn} that whenever a neural network takes a matrix input $X = [x_1,\dots,x_p] \in \mathbb{R}^{n \times p}$, it is applied columnwise to each column $x_j \in \mathbb{R}^n$. In general an attention module and a neural network are distinct objects. But there is one special case when they are related.
\begin{lemma}[One-layer  neural networks as encoder blocks]\label{lem:1layer}
Let $\varphi: \mathbb{R}^n \to  \mathbb{R}^{n_2}$ be a one-layer neural network. Then
\[
\varphi : \mathbb{R}^{n \times p} \to \mathbb{R}^{n_2 \times p},\quad
[x_1,\dots,x_p] \mapsto [\varphi(x_1),\dots,\varphi(x_p)],
\]
is an encoder block of the form
$\varphi_1 \circ \alpha_1$
where $\varphi_1$ is also a one-layer neural network and $\alpha_1$ is an attention module.
\end{lemma}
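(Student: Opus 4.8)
The plan is to realize the columnwise-applied map $\varphi$ in the form $\varphi_1 \circ \alpha_1$ by taking $\alpha_1$ to be an attention module that happens to act as the identity on $\mathbb{R}^{n \times p}$, and then setting $\varphi_1 = \varphi$. Writing the given one-layer neural network as $\varphi(x) = A_2 \ReLU(A_1 x + b_1) + b_2$ with $A_1 \in \mathbb{R}^{n_1 \times n}$, $A_2 \in \mathbb{R}^{n_2 \times n_1}$, $b_1 \in \mathbb{R}^{n_1}$, $b_2 \in \mathbb{R}^{n_2}$, the point is that once $\alpha_1$ is the identity, the composition $\varphi_1 \circ \alpha_1$ applies $\varphi$ to each column of $X$, which is exactly the map in the statement.

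First I would build the identity attention module. In the notation of \eqref{eq:att1} and \eqref{eq:QKV} I would take $d = p$ and $m = n$, and set
\[
A_Q = A_K = 0 \in \mathbb{R}^{p \times n}, \quad B_Q = B_K = I_p, \quad A_V = I_n, \quad B_V = 0,
\]
so that $K(X) = Q(X) = I_p$ for every $X \in \mathbb{R}^{n \times p}$. Then $K(X)^\tp Q(X) = I_p$, and since the identity matrix has nonnegative entries we get $\ReLU(I_p) = I_p$; hence $\alpha_1(X) = V(X)\ReLU(K(X)^\tp Q(X)) = X \cdot I_p = X$. Thus $\alpha_1$ is a (single-head) attention module equal to the identity on $\mathbb{R}^{n \times p}$.

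It then remains to set $\varphi_1 \coloneqq \varphi$, regarded as a one-layer neural network $\mathbb{R}^n \to \mathbb{R}^{n_2}$ and hence, via the columnwise convention of Section~\ref{sec:nn}, as a map $\mathbb{R}^{n \times p} \to \mathbb{R}^{n_2 \times p}$. Then $\varphi_1 \circ \alpha_1(X) = \varphi_1(X) = [\varphi(x_1), \dots, \varphi(x_p)]$, which is precisely the asserted map, and by \eqref{eq:encblk} this composition is an encoder block of the required form.

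I do not expect a genuine obstacle here; the only step requiring a moment's thought is the realization of the identity as an attention module, which relies on forcing the argument $K(X)^\tp Q(X)$ of the inner $\ReLU$ to be the constant matrix $I_p$ using the bias terms $B_K, B_Q$ (available precisely because we use the affine form of the linear layers $Q, K, V$), combined with the fact that $\ReLU$ fixes $I_p$. One caveat worth recording is that this device is particular to the $\ReLU$-attention of this section: for $\SoftMax$-attention one would instead need $\SoftMax(K(X)^\tp Q(X)) = I_p$, which is impossible, so a different construction would be required there.
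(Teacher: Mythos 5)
Your proof is correct, but it takes a genuinely different and more economical route than the paper. The paper does not realize the identity as an attention module; instead it builds an $np$-headed attention $\alpha$ out of heads $\alpha_{ij}$ (with $A_V=E_{1i}$, $B_K=B_Q=E_{1j}$) whose output contains the block-diagonal ``spread'' $\operatorname{diag}(x_1,\dots,x_p)\in\mathbb{R}^{np\times p}$ of the columns of $X$, and then absorbs a linear left inverse $\psi$ of this spreading into the neural network, setting $\varphi_1=\varphi\circ\psi$ (still one hidden layer, since $\psi$ is affine and precedes the first weight matrix). You instead take a single head with $d=p$, $A_K=A_Q=0$, $B_K=B_Q=I_p$, $A_V=I_n$, so that $\ReLU\bigl(K(X)^{\tp}Q(X)\bigr)=\ReLU(I_p)=I_p$ and $\alpha_1=\mathrm{id}$, then set $\varphi_1=\varphi$; this verifies the lemma with one head and no modification of $\varphi$, and it also survives masking (since $\ReLU(\mask(I_p))=I_p$), so the downstream use in Corollary~\ref{cor:auto} is unaffected. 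What each approach buys: yours is shorter and makes transparent that the lemma is really the statement ``the identity is a $\ReLU$-attention module,'' while the paper's heads have $d=m=1$ (matching the practical regime $d\ll p$ and the style of Lemma~\ref{lem:main2}) whereas your single head needs key/query dimension $d=p$ tied to the sequence length. Both constructions, not just yours, exploit the affine biases $B_K,B_Q$ and are $\ReLU$-specific (neither $E_{jj}$ nor $I_p$ is fixed by $\SoftMax$), so your closing caveat applies equally to the paper's proof.
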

\begin{proof}
Consider the attention module $\alpha_{ij} : \mathbb{R}^{n \times p} \to \mathbb{R}^{m \times p}$ given by
\[
A_V = E_{1i}, \quad B_V = 0, \quad A_K = 0, \quad
B_K = E_{1j}, \quad A_Q = 0, \quad B_Q = E_{1j}.
\]
The first row of $\alpha_{ij}(X)$ has $x_{ij}$ in its $(1, j)$th entry and zeros elsewhere. If we stack these attention modules $\alpha_{ij}$, $i \in \{1,\dots,n\}$, $ j\in \{1,\dots,p\}$ together, we obtain an $np$-headed attention module $\alpha : \mathbb{R}^{n \times p} \to \mathbb{R}^{mnp \times p}$. By construction, $\alpha(X)$ contains a submatrix of the form 
\begin{equation}\label{eq:submat}
\begin{bmatrix}
  x_1 & 0 & \cdots & 0 \\
  0 & x_2 & \cdots & 0 \\
  \vdots  & \vdots  & \ddots & \vdots  \\
  0 & 0 & \cdots & x_p 
\end{bmatrix} \in \mathbb{R}^{np \times p},
\end{equation}
where $x_j \in \mathbb{R}^n$ is the $j$th column of $X \in \mathbb{R}^{n \times p}$, $j =1,\dots,p$.
Let $\psi : \mathbb{R}^{np} \to \mathbb{R}^p$ be the affine map given by
\[
\mathbb{R}^{np} \ni \begin{bmatrix} x_1 \\ x_2 \\ \vdots \\ x_p \end{bmatrix} \mapsto x_1 + x_2 + \dots + x_p \in \mathbb{R}^n.
\]
We apply $\psi$ columnwise to $\alpha(X)$, extending its domain so that $\psi$ maps every row outside the submatrix in \eqref{eq:submat} to zero. Then the submatrix in \eqref{eq:submat} is transformed as
\[
\begin{bmatrix}
  x_1 & 0 & \cdots & 0 \\
  0 & x_2 & \cdots & 0 \\
  \vdots  & \vdots  & \ddots & \vdots  \\
  0 & 0 & \cdots & x_p 
\end{bmatrix} \mapsto 
[ x_1 , x_2 , \dots, x_p] = X
\]
and every other row outside of this submatrix gets mapped to zero. In other words $\psi$ is a left inverse of $\alpha$. The required statement then follows from
$(\varphi\circ \psi) \circ \alpha = \varphi\circ (\psi \circ \alpha) = \varphi$,
with $\varphi_1 = \varphi \circ \psi$ and $\alpha_1 = \alpha$.
\end{proof}
While the definition of an encoder as in Section~\ref{sec:enc} does not require the neural networks within it to have only one hidden layer, the original version in \cite{vaswani2017attention} does. Lemma~\ref{lem:1layer} shows that this is not really a more stringent requirement since whenever we are presented with a multilayer  neural network we may repeatedly apply Lemma~\ref{lem:1layer} to turn it into the form required in \cite{vaswani2017attention}.

Assuming the Pierce--Birkhoff conjecture, we may now show that any matrix-valued spline $f : \mathbb{R}^{n \times p} \to \mathbb{R}^{r \times p}$ is an encoder. We prove the most general case possible so that other special cases follow effortlessly: the corresponding result for vector-valued splines is obtained by setting $p =1$ and that for scalar-valued splines by setting $r = p = 1$. Note also that the result below applies to splines defined on any semialgebraic partition --- the most common rectilinear partition obtained through triangular of domain is also a special case.

\begin{theorem}[Splines as encoders]\label{thm:main2}
Let $f : \mathbb{R}^{n \times p} \to \mathbb{R}^{r \times p}$ be a max-definable function. Then $f$ is a $t$-layer encoder for some finite $t \in \mathbb{N}$. More precisely, there exist $t$ attention modules $\alpha_1,\dots,\alpha_t$ and $t$ one-layer neural networks $\varphi_1,\dots,\varphi_t$
such  that
\[
f = \varphi_t \circ \alpha_t \circ \varphi_{t-1} \circ \alpha_{t-1} \circ \dots \circ\varphi_1 \circ \alpha_1.
\]
If the Pierce--Birkhoff conjecture holds, then any degree-$k$ spline is an encoder.
\end{theorem}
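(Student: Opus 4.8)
The plan is to prove the theorem in two parts. First I would handle the claim that every max-definable function $f : \mathbb{R}^{n \times p} \to \mathbb{R}^{r \times p}$ is a $t$-layer encoder, and then deduce the conditional statement about degree-$k$ splines as an immediate corollary via the Pierce--Birkhoff conjecture and Lemma~\ref{lem:PB}.

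For the max-definable case, the strategy is to build up $f$ from the generators $1, x_{11}, \dots, x_{np}$ using the three operations $(x,y) \mapsto x + y$, $(x,y)\mapsto x \cdot y$, $(x,y)\mapsto \max(x,y)$, together with scalar multiplication, while maintaining the invariant that at each stage we have an encoder whose output, in every one of its $p$ columns, contains (as designated entries, each the unique nonzero entry of its row) all the intermediate quantities computed so far, plus the constant $1$ and all the original variables $x_{ij}$. Lemma~\ref{lem:main2} already establishes the base of this induction: after two layers we have an encoder whose columns all contain $1$, every $x_{ij}$, and every degree-$\le 2$ monomial, each isolated in its own row. The inductive step is to show that, given such an encoder, we can append one more attention module and one more one-layer neural network that additionally isolates $g+h$, $\lambda g$, $\max(g,h)$, or $g\cdot h$ in a new row, where $g$ and $h$ are quantities already isolated in existing rows. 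Sums and scalar multiples are handled by a single one-layer neural network (using $x = x^+ - x^-$ to pass through the old rows and $\lambda g = \lambda g^+ - \lambda g^-$, $g + h = g^+ - g^- + h^+ - h^-$); $\max(g,h) = g + (h-g)^+$ is likewise a one-layer network computation. The product $g \cdot h$ is produced exactly as in the quadratic step of Lemma~\ref{lem:main2}: an attention head with $A_V = E_{1\ih}$, $B_K = E_{1j}$, $A_Q = E_{1\ib}$ (indices chosen to pick out the rows holding $g$ and $h$) yields the $(1,j)$ entry $g \cdot h^+$, and stacking the analogous head for $-h$ and composing with a one-layer net gives $g h = g h^+ - g(-h)^+$. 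Since a max-definable function is built by finitely many such operations, finitely many iterations produce an encoder isolating every coordinate $f_{ij}$ of $f$; a final one-layer neural network $\varphi_t$ assembles these into the $r \times p$ output matrix. One must also ensure the structural form $f = \varphi_t \circ \alpha_t \circ \dots \circ \varphi_1 \circ \alpha_1$ with neural networks appearing between consecutive attention modules, which is arranged by absorbing any standalone neural-network layer into the adjacent construction (or inserting an identity attention module, which exists by a construction like that in Lemma~\ref{lem:1layer}) and by invoking Lemma~\ref{lem:1layer} to split any multilayer network into one-layer pieces at the cost of increasing $t$.

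The conditional final statement then follows quickly. Given a degree-$k$ spline $f : \mathbb{R}^{n \times p} \to \mathbb{R}^{r \times p}$, apply Lemma~\ref{lem:PB} coordinatewise: assuming Pierce--Birkhoff, each coordinate $f_{ij}$ equals $\ell_{ij} \circ v_{k'}$ for some $k'$ and some linear spline $\ell_{ij}$, hence $f_{ij}$ is max-definable in $x_{11}, \dots, x_{np}$ (equivalently, $f = \max_i \min_j \xi_{ij}$ with polynomial $\xi_{ij}$, which is visibly max-definable). Thus $f$ is a max-definable function and the first part of the theorem applies. Alternatively one can route through the Veronese map directly: Lemma~\ref{lem:main2} generalizes by induction to show every $v_{k}$ is realizable as an encoder (iterating the quadratic construction and using Lemma~\ref{lem:comp1} to see $v_{2^s}$, hence any $v_k$, appears inside an $s$-fold composition), a linear spline is a neural network by Theorem~\ref{thm:relulinear} hence an encoder block by Lemma~\ref{lem:1layer}, and $f = \ell \circ v_k$ is then a composition of encoders, which is an encoder.

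I expect the main obstacle to be bookkeeping rather than conceptual: carefully tracking, through each inductive step, which row of the current encoder output holds which quantity, verifying that each such quantity is the unique nonzero entry of its row (so that the attention-head gadgets pick out exactly the intended scalars without cross-contamination), and confirming that the alternating $\varphi_i \circ \alpha_i$ structure can always be maintained. The genuinely delicate point is the product gadget: one must check that the $\SoftMax$-free $\ReLU$-attention really does evaluate to $g \cdot h^+$ in the designated entry --- i.e., that $V(X)\ReLU(K(X)^\tp Q(X))$ with the stated weight choices produces a single entry equal to the product of the selected rows and zeros elsewhere --- and that subtracting the $-h$ version recovers the exact (unclipped) product. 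All of this is essentially a repetition and iteration of arguments already carried out in the proof of Lemma~\ref{lem:main2}, so the work is routine but must be organized with care.
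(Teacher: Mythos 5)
Your main route is correct in outline, but it is organized differently from the paper's proof. The paper does not induct over the max-definable expression tree: it writes each column function as $f_j=\ell_j\circ v_s$ (linear spline composed with a single Veronese map, which needs no Pierce--Birkhoff for max-definable $f$), realizes $v_s$ by composing $\lceil\log_2 s\rceil$ copies of the Lemma~\ref{lem:main2} encoder, and then spends most of its effort on the step your ``alternative route'' glosses over: since the neural network implementing $\ell=(\ell_1,\dots,\ell_p)$ is applied \emph{columnwise} to a block-diagonal matrix, column $i$ picks up spurious offsets $\ell_j(0)$ for $j\neq i$, and the paper must graft $2r$ extra constant-producing heads onto the last attention module and append a summation map $\psi$ precisely to cancel these offsets. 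So if you had relied only on the $\ell\circ v_k$ route, the claim ``a composition of encoders, which is an encoder'' would hide a real gap. Your primary route --- structural induction on the max-definable expression, with product gadgets supplied by attention heads (your computation that the head with $A_V=E_{1\ih}$, $B_K=E_{1j}$, $A_Q=E_{1\ib}$ outputs $g\,h^+$ isolated in column $j$, with zeros elsewhere because the $h$-row vanishes off column $j$, is the same calculation as in Lemma~\ref{lem:main2}) and with sums, scalar multiples, maxima, and constants handled by bias-free one-layer networks acting on isolated rows and on the stored $1$-rows --- sidesteps that cancellation entirely, because your final assembly is a genuinely linear, bias-free map that sends the zero entries of off-columns to zero. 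The trade-offs: the paper's proof is shorter because it funnels everything through one normal form $\ell_j\circ v_s$ and reuses Lemma~\ref{lem:main2} as a black box (at the cost of the offset-cancellation trick), whereas your compilation argument computes only the subexpressions actually needed and makes the per-operation role of heads, layers, and the feed-forward networks explicit, at the cost of heavier invariant bookkeeping (every intermediate quantity present in every column, each the unique nonzero entry of its row, through both attention and network layers) which you would need to carry out carefully but which, as you note, repeats the gadgets already verified in Lemma~\ref{lem:main2}. Your handling of the conditional statement (Pierce--Birkhoff gives $f=\max\min\xi_{ij}$ coordinatewise, hence max-definable) matches the paper's logic.
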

\begin{proof}
Let $f(X) = [f_1(X), \dots, f_p(X)] \in \mathbb{R}^{r \times p}$ with $f_j(X) \in \mathbb{R}^r$, $j =1,\dots, p$. By Lemma~\ref{lem:PB}, we may write $f_j = \ell_j \circ v_{k_j}$ where $\ell_j$ is a linear spline and $v_{k_j}$ the Veronese map of degree $k_j$. Let $s \coloneqq \max ( k_1,\dots, k_p)$. Then by padding  $\ell_j$ with extra terms with zero coefficients, we may assume
\begin{equation}\label{eq:flv}
f_j = \ell_j \circ v_{k_j} = \ell_j \circ v_s.
\end{equation}
Note that $\ell_j :  \mathbb{R}^{\binom{np + s}{s}} \to \mathbb{R}^r$.

It follows from Lemma~\ref{lem:comp1} that we may obtain all monomials of degree not more than $s$ by composing the quadratic Veronese map with itself sufficiently many times. So by composing $\lceil \log_2 s \rceil$ copies of the encoder constructed in Lemma~\ref{lem:main2}, we obtain an encoder
\begin{equation}\label{eq:enck}
\varepsilon \coloneqq \varepsilon_2 \circ \dots \circ \varepsilon_2
\end{equation}
with the property that any column of $\varepsilon(X)$ contains a copy of Veronese map of degree $s$, i.e.,
\[
\varepsilon(X) =
\begin{bmatrix}
v_s(X) & 0 & \cdots & 0 \\
0 & v_s(X) & \cdots & 0 \\
\vdots & \vdots & \ddots & \vdots \\
0 & 0 & \cdots & v_s(X)
\end{bmatrix}  \in \mathbb{R}^{n_t \times p}.
\]
There is a slight abuse of notation in \eqref{eq:enck}: We have assumed that the $(i+1)$th copy of $\varepsilon_2$ has input dimension $n_{i+1} \coloneqq (n_i p+2)(n_i p+1)/2$, the output dimension of the $i$th copy of $\varepsilon_2$. Strictly speaking these are different maps since domains and codomains are different although we denote all of them as $\varepsilon_2$. Also, in the final layer, we drop any rows that we do not need --- this is not a problem as ``dropping rows'' is just a modification of the  neural network in the last layer $\varphi_t$, which we will be modifying anyway below.

Expanding each copy of $\varepsilon_2$ as in \eqref{eq:enc2}, we obtain the structure in \eqref{eq:enc}, i.e.,
\begin{equation}\label{eq:enck2}
\varepsilon = \varphi_t \circ \alpha_t \circ \varphi_{t-1} \circ \alpha_{t-1} \circ \dots \circ\varphi_1 \circ \alpha_1
\end{equation}
for some $t \in \mathbb{N}$. We will modify the attention module $\alpha_t : \mathbb{R}^{n_t \times p} \to \mathbb{R}^{m_t \times p}$ in the last layer. For $i =1,\dots,2r$, we let $\alpha^{(i)} : \mathbb{R}^{n_t  \times p} \to \mathbb{R}^{m \times p}$ be a (single-head) attention module with
\[
A_V^{(i)} =0,\quad A_K^{(i)} = 0,\quad A_Q^{(i)} = 0,\quad B_V^{(i)} = E_{11},\quad  B_K^{(i)} = E_{11},
\]
and $B_Q^{(i)}$ is a nonnegative constant matrix to be determined later. The first row of $\alpha^{(i)}(X)$ is the first row of $B_Q^{(i)}$, i.e., $\alpha^{(i)}(X)$ contains a row of nonnegative constants. By stacking $2r$ heads $\alpha^{(1)}, \dots, \alpha^{(2r)}$ onto $\alpha_t$, we obtain a modified attention module with $2r$ extra heads,
\[
\widehat{\alpha}_t \coloneqq (\alpha^{(1)},\dots,\alpha^{(2r)}, \alpha_t) : \mathbb{R}^{n_t \times p} \to \mathbb{R}^{(2mr + m_t)\times p}
\]
Prefixing these heads to $\alpha_t$ will allow us to add $2r$ rows of nonnegative constants to $\varepsilon(X)$.

First, by modifying the  neural network $\varphi_t$ to $\widehat{\varphi}_t$, one that keeps the first row of each those $2r$ extra heads, we see that $\widehat{\varphi}_t \circ \widehat{\alpha}_t(X)$ will have $2r$ rows of nonnegative constants irrespective of $X$. We may also choose $\widehat{\varphi}_t$ so that these occur as the first through $2r$th rows, denoted as
\[
\begin{bmatrix}
b_1 & b_2 & \cdots & b_p \\
b_1' & b_2' & \cdots & b_p'
\end{bmatrix} \in \mathbb{R}^{2r \times p},
\]
for some $b_i, b_i' \in \mathbb{R}_+^r$, $i=1,\dots,p$. Note that each row of the matrix above comes from one of the added heads $\alpha^{(1)}, \dots, \alpha^{(2r)}$.

By replacing $\varphi_t$ and $\alpha_t$ in \eqref{eq:enck2} with $\widehat{\varphi}_t$ and $\widehat{\alpha}_t$, we obtain an encoder $\widehat{\varepsilon}  : \mathbb{R}^{n \times p} \to \mathbb{R}^{(2r+n_t) \times p}$,
\[
\widehat{\varepsilon} \coloneqq \widehat{\varphi}_t \circ \widehat{\alpha}_t \circ \varphi_{t-1} \circ \alpha_{t-1} \circ \dots \circ\varphi_1 \circ \alpha_1.
\]
By our construction we must have
\[
\widehat{\varepsilon}(X) =
\begin{bmatrix}
b_1 & b_2 & \cdots & b_p \\
b_1' & b_2' & \cdots & b_p' \\
v_s(X) & 0 & \cdots & 0 \\
0 & v_s(X) & \cdots & 0 \\
\vdots & \vdots & \ddots & \vdots \\
0 & 0 & \cdots & v_s(X)
\end{bmatrix} \in \mathbb{R}^{(2r+n_t) \times p}.
\]

Define the linear spline $\ell :  \mathbb{R}^{2r + \binom{np + s}{s}p} \to \mathbb{R}^{(p+2)r}$ by
\[
\ell(x, x', x_1, \dots, x_p) = \bigl(x, x', \ell_1(x_1), \dots, \ell_p(x_p)\bigr).
\]
Here $x, x' \in \mathbb{R}^r$, $x_1, \dots, x_p \in \mathbb{R}^{\binom{np + s}{s}}$.
By Theorem~\ref{thm:relulinear}, linear splines are exactly  neural networks. Recall from Section~\ref{sec:nn} that when we apply a  neural network  to a matrix, we apply it columnwise. Hence
\[
\ell \circ \widehat{\varepsilon}(X) =
\begin{bmatrix}
b_1 & b_2 & \cdots & b_p \\
b_1' & b_2' & \cdots & b_p' \\
f_1(X) & \ell_1(0) & \cdots & \ell_1(0) \\
\ell_2(0) & f_2(X) & \cdots & \ell_2(0) \\
\vdots & \vdots & \ddots & \vdots \\
\ell_p(0) & \ell_p(0) & \cdots & f_p(X)
\end{bmatrix},
\]
where we have used \eqref{eq:flv}. Now we set
\[
b_i = \ReLU\Bigl(-\sum_{j \neq i} \ell_j(0)\Bigr), \quad
b_i' = \ReLU\Bigl(\sum_{j \neq i} \ell_j(0)\Bigr),
\]
for each $i =1,\dots,p$.
Let $\psi: \mathbb{R}^{(p+2)r} \to \mathbb{R}^r$ be the linear map defined by
\[
\psi(y, y', y_1, \dots, y_p) = y - y' + y_1+ \dots +y_p,
\]
where $y, y', y_1, \dots, y_p \in \mathbb{R}^r$. 
Then the composition of $\psi \circ \ell \circ \widehat{\varepsilon}$ has
\[
\psi \circ \ell \circ \widehat{\varepsilon}(X) = [f_1(X), \dots, f_p(X)] = f(X),
\] 
as required. At this point we have obtained $f$ as an encoder according to the definition in Section~\ref{sec:enc} since $\psi \circ \ell \circ \widehat{\varphi}_t$ is clearly a multilayer neural network. By our remark after the proof of Lemma~\ref{lem:1layer}, it may be converted into an alternate composition of attention modules and single-layer neural networks.
\end{proof}
In case the reader is wondering the value of $s$ in the proof above is not necessarily $k$ and can be strictly larger. To the best of our knowledge, there is not even a \emph{conjectural}  effective version of Conjecture~\ref{conj:PB} in the literature. So unlike Theorem~\ref{thm:relulinear}, any bounds on the number of encoder blocks, number of heads of attention modules, width of the neural networks, etc, are beyond reach  at this point.

Just as Theorem~\ref{thm:relulinear} establishes the equivalence between $\ReLU$-neural networks and linear splines, various parts of the results in this article collectively establish the equivalence between $\ReLU$-encoders and splines, assuming the validity of the Pierce--Birkhoff conjecture.
\begin{corollary}
If the Pierce--Birkhoff conjecture holds, then the following classes of functions are all equal:
\begin{enumerate}[label=\upshape{(\roman*)}, nosep]
\item splines;

\item encoders;

\item max-definable functions;

\item linear splines composed with the Veronese map.
\end{enumerate}
\end{corollary}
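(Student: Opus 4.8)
The plan is to deduce the corollary as a formal assembly of results already in hand, by running the cycle of inclusions
\[
\text{(i)}\ \subseteq\ \text{(iv)}\ \subseteq\ \text{(iii)}\ \subseteq\ \text{(ii)}\ \subseteq\ \text{(i)},
\]
where each step invokes one of Theorem~\ref{thm:main1}, Theorem~\ref{thm:main2}, or Lemma~\ref{lem:PB}; only two small pieces of bookkeeping require genuine (but routine) attention.

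First, \text{(i)} $\subseteq$ \text{(iv)}: assume the Pierce--Birkhoff conjecture. Lemma~\ref{lem:PB}, in its matrix-variate form \eqref{eq:vl}, gives for every scalar spline $\mathbb{R}^{n\times p}\to\mathbb{R}$ a factorization $\ell\circ v_k$ with $\ell$ a linear spline. For a matrix-valued spline $f=(f_{ij}):\mathbb{R}^{n\times p}\to\mathbb{R}^{r\times p}$, write each coordinate $f_{ij}=\ell_{ij}\circ v_{k_{ij}}$, set $s\coloneqq\max_{ij}k_{ij}$, and pad each $\ell_{ij}$ with zero-coefficient terms so that $f_{ij}=\ell_{ij}\circ v_s$ for the common degree $s$; then $f=\ell\circ v_s$ with $\ell\coloneqq(\ell_{ij})$ a linear spline, so $f$ lies in \text{(iv)}. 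Next, \text{(iv)} $\subseteq$ \text{(iii)}: if $f=\ell\circ v_k$ with $\ell$ a linear spline, then every coordinate of $v_k$ is a monomial in the entries of $X$, hence a product of variables, hence max-definable; and every coordinate of $\ell$ is a linear spline, hence of the form $\min_i\max_j\xi_{ij}$ with $\xi_{ij}$ affine by \cite{Ovchinnikov2002}, hence max-definable. Since substitution commutes with $+$, $\cdot$, $\max$, and scalar multiplication, plugging max-definable functions into a max-definable function yields a max-definable function; applying this coordinatewise shows $f\in$ \text{(iii)}.

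Finally, \text{(iii)} $\subseteq$ \text{(ii)} is precisely Theorem~\ref{thm:main2}: every max-definable $f:\mathbb{R}^{n\times p}\to\mathbb{R}^{r\times p}$ --- including the vector case $p=1$ and the scalar case $r=p=1$ --- is a $t$-layer encoder for some finite $t$. And \text{(ii)} $\subseteq$ \text{(i)} is Theorem~\ref{thm:main1}\ref{it:enc}: a $t$-layer encoder is a spline (of degree $3^t$). This closes the cycle and proves all four classes equal.

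No single step is hard here, since the substantive work lives in Theorems~\ref{thm:main1} and \ref{thm:main2} and Lemma~\ref{lem:PB}; the only places needing care are the padding to a common Veronese degree in \text{(i)} $\subseteq$ \text{(iv)} and the closure of max-definable functions under composition in \text{(iv)} $\subseteq$ \text{(iii)}. One could instead close the cycle using the trivial inclusion ``max-definable $\subseteq$ spline'' (noted in the text) for \text{(iii)} $\subseteq$ \text{(i)} and the Pierce--Birkhoff conjecture itself for the reverse \text{(i)} $\subseteq$ \text{(iii)}, but routing through \text{(iv)} via Lemma~\ref{lem:PB} has the cosmetic advantage of keeping every implication pinned to a lemma proved in the paper.
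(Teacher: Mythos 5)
Your proposal is correct and follows essentially the same route as the paper, which states the corollary without a written proof precisely because it is the assembly of Lemma~\ref{lem:PB}, Theorem~\ref{thm:main2}, Theorem~\ref{thm:main1}\ref{it:enc}, and the remark that max-definable functions are splines. Your cycle of inclusions just makes this assembly explicit, and the two bookkeeping points you flag (padding to a common Veronese degree, which the paper itself uses in \eqref{eq:flv}, and closure of max-definable functions under substitution) are handled correctly.
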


While our article is about understanding transformers in terms of splines, there is a somewhat unexpected payoff: the proof of Theorem~\ref{thm:main2} yields a way to construct autoregressive splines. There appears to be no universally agreed-upon meaning for the term ``autoregressive spline'' in the existing literature. In particular none replicates \eqref{eq:auto} and we are unaware of any construction that yields a spline that is autoregressive in the sense of \eqref{eq:auto}.

\begin{corollary}[Autoregressive splines as decoders]\label{cor:auto}
Let $k \in \mathbb{N}$ and $f : \mathbb{R}^{n \times p} \to \mathbb{R}^{m \times p}$ be an autoregressive max-definable function. Then $f$ is a $t$-layer decoder for some finite $t \in \mathbb{N}$. More precisely, there exist $t$ masked attention modules $\beta_1,\dots,\beta_t$ and $t$ one-layer neural networks $\varphi_1,\dots,\varphi_t$ such  that
\[
f = \varphi_t \circ \beta_t \circ \varphi_{t-1} \circ \beta_{t-1} \circ \dots \circ\varphi_1 \circ \beta_1.
\]
If the Pierce--Birkhoff conjecture holds, then any degree-$k$ autoregressive spline is a decoder.
\end{corollary}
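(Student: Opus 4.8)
The plan is to re-run the proof of Theorem~\ref{thm:main2} with every attention module replaced by its masked counterpart, and to check the two things this substitution must preserve: that the output stays autoregressive, and that it still equals $f$. The first is automatic. A masked attention module is autoregressive by construction (Section~\ref{sec:mask}); a composition of autoregressive maps is autoregressive; and post-composing an autoregressive map with a neural network --- which acts columnwise --- leaves it autoregressive. Consequently any map of the form $\varphi_t \circ \beta_t \circ \varphi_{t-1}\circ\beta_{t-1}\circ\dots\circ\varphi_1\circ\beta_1$ with the $\beta_i$ masked is, ipso facto, a decoder in the sense of Section~\ref{sec:dec} and autoregressive in the sense of \eqref{eq:auto}; the whole content of the corollary is therefore to realize a \emph{prescribed} autoregressive $f$ in this form.

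The engine is a masked analogue of Lemma~\ref{lem:main2}: I would first establish that there is a two-layer decoder whose $j$th output column carries a copy of $v_2(x_1,\dots,x_j)$, zero-padded into the slot occupied by $v_2(X)$ in Lemma~\ref{lem:main2}. To prove this one runs the proof of Lemma~\ref{lem:main2} verbatim but masks every single-head module. In that proof each matrix $K(X)^\tp Q(X)$ is supported either on a single entry $E_{\ih\jh}$ of $\mathbb{R}^{p\times p}$ (the linear and constant heads) or on a single row $j$ (the quadratic heads); applying $\mask$ annihilates exactly the contributions in position $(\ih,\jh)$ with $\ih>\jh$, which are precisely those that would deposit a quantity depending on input column $\ih$ into output column $\jh$. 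Tracking this through: stage one puts $x_{\ih\jh}$ into column $j$ exactly when $\jh\le j$; the constant-$1$ heads still function since they target the on-diagonal entry $(1,j)$; and stage two, building products out of entries already present in column $j$, produces exactly the degree-two monomials in $x_1,\dots,x_j$. Composing $\lceil\log_2 s\rceil$ such decoders as in \eqref{eq:enck} and invoking Lemma~\ref{lem:comp1}, the $j$th column of the result contains $v_s(x_1,\dots,x_j)$. I would also record that the masked version of Lemma~\ref{lem:1layer} holds verbatim --- there $K(X)^\tp Q(X)=E_{jj}$ sits on the diagonal and is fixed by $\mask$ --- so the final object may be presented with single-layer neural networks.

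To finish I would follow the proof of Theorem~\ref{thm:main2} without changing its structure. Write $f=[f_1,\dots,f_p]$ with $f_j$ depending only on $x_1,\dots,x_j$, and apply the matrix-variate form of Lemma~\ref{lem:PB} to each $f_j:\mathbb{R}^{n\times j}\to\mathbb{R}^m$ to get $f_j=\ell_j\circ v_{k_j}$; set $s\coloneqq\max_j k_j$ and pad so that $f_j=\ell_j\circ v_s$ with $v_s:\mathbb{R}^{n\times j}\to\mathbb{R}^{\binom{nj+s}{s}}$. Because the zero-padded $v_s(x_1,\dots,x_j)$ sitting in column $j$ carries precisely the monomials that $\ell_j$ reads (the absent ones having zero coordinates), applying the columnwise linear spline $\ell$ of Theorem~\ref{thm:main2} places $f_j(X)$ into column $j$; the $2m$ rows of nonnegative constants are supplied by masked constant-heads (harmless, since masking never touches their first row), and the final linear map $\psi$ with $b_i=\ReLU(-\sum_{j\ne i}\ell_j(0))$, $b_i'=\ReLU(\sum_{j\ne i}\ell_j(0))$ collapses everything to $f(X)$. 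Every block is a masked attention module or a one-layer neural network, so $f$ is a decoder; the second assertion is then immediate, since the Pierce--Birkhoff conjecture makes each $f_j$ max-definable. The main obstacle is exactly the middle paragraph: verifying that, after masking, the single-entry or single-row support of each $K(X)^\tp Q(X)$ lands on or above the diagonal precisely for the causal monomials, so that no future input variable ever leaks into an earlier output column while every causal monomial is still produced.
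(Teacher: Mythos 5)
Your proposal is correct and follows essentially the same route as the paper's own proof, which simply reruns Lemma~\ref{lem:main2}, Lemma~\ref{lem:1layer}, and Theorem~\ref{thm:main2} with masked attention modules, observing that in the $(1,j)$th entry one only needs the monomials $x_{\ih\jh}$ and $x_{\ih\jh}x_{\ib\jb}$ with $\jh,\jb\le j$. Your verification of how $\mask$ acts on the single-entry or single-row support of each $K(X)^\tp Q(X)$ (and that the constant heads and Lemma~\ref{lem:1layer} are unaffected) is exactly the check the paper leaves implicit, so no gap.
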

\begin{proof}
The proof of Lemma~\ref{lem:main2} applies almost verbatim. In fact it is slightly simpler since in the $(1,j)$th entry, we only need to construct monomials of the form $x_{\ih\jh}$, $x_{\ih\jh} x_{\ib\jb}$ for $\jh, \jb \le j$. The same constructions used to obtain $A_V, B_V, A_K, B_K, A_Q, B_Q$ produce these required monomials when we use masked attention modules in place of attention modules. The proofs of Lemma~\ref{lem:1layer} and Theorem~\ref{thm:main2} then apply with masked attention modules in place of attention modules.
\end{proof}
A similar construction can be extended to construct partially autoregressive splines as encoder--decoders.

\section{Conclusion}\label{sec:conc}

It is an old refrain in mathematics that one does not really understand a mathematical proof until one can see how every step is inevitable. This is the level of understanding that we hope   Section~\ref{sec:s=t} provides for the transformer.

\subsection{Insights}

Arora et al.\ \cite{arora2018understanding} have shown that neural networks are exactly linear splines. Since compositions of linear splines are again linear splines, to obtain more complex functions we need something in addition to neural networks. Viewed in this manner, the attention module in Section~\ref{sec:att} is the simplest function that serves the role. Lemma~\ref{lem:main2} shows that the quadratic Veronese map, arguably the simplest map that is not a linear spline, can be obtained by composing two attention modules. The proof reveals how heads and layers are essential: It would fail if we lacked the flexibility of having multiple heads and layers. The proof also shows how a neural network works hand-in-glove with attention module: It would again fail if we lack either one. The proof of Theorem~\ref{thm:main2} then builds on Lemma~\ref{lem:main2}: By composing quadratic Veronese maps we can obtain Veronese map of any higher degree; and by further composing it with linear splines we obtain all possible splines. The resulting map, an alternating composition of attention modules and neural networks, is exactly the encoder of a transformer.

There are some other insights worth highlighting. Lemma~\ref{lem:1layer} explains why the neural networks within a transformer require no more than one hidden layer; Vaswani et al.\ \cite{vaswani2017attention} likely arrived at this same conclusion through their experimentation. Theorem~\ref{thm:main1}\ref{it:enc} shows why layering attention modules and neural networks makes for an effective way to increase model complexity --- the degree of the spline $3^t$ increases exponentially with the number of layers $t$.

\subsection{Recommendations}

Recent work of Wortsman et al.\ \cite{wortsman2023replacing} shows that a $\ReLU$-transformer is perfectly capable of achieving results of similar quality as the original $\SoftMax$-transformer, offering significant computational savings. We also advocate the use of $\ReLU$ activation, if only for turning a nearly-mystical and sometimes-feared technology into a familiar friendly one. In which case we could drop the word ``smoothed'' in our title --- attention is a cubic spline. 

If a smooth function is desired, we argue for using $\SoftPlus$ instead of $\SoftMax$ as activation. The $\SoftMax$ function is the natural smooth proxy for $\argmax$ as well as the derivative of $\SoftPlus$, also known as the log-sum-exp function, which is in turn the natural smooth proxy for $\ReLU$. Indeed $\SoftPlus$ has been used in place of $\ReLU$ to construct smooth neural networks with encouraging results \cite{Gaubert}. Despite their intimate relationship, $\SoftMax$ makes for a poor proxy for $\ReLU$. On the basis of our work, a $\SoftPlus$-activation would be natural, smooth, and preserves fidelity with splines.

Lastly, Section~\ref{sec:s=t} points to the importance of a nearly forgotten seventy-year-old conjecture about splines by one of its pioneers.  Indeed, Theorem~\ref{thm:main2} shows that the Pierce--Birkhoff conjecture is true if and only if every spline is an encoder. Perhaps this article will rekindle interest in the conjecture and point a way towards its resolution.



\bibliographystyle{abbrv}
\bibliography{ref}

\end{document}